\documentclass[manuscript,screen,nonacm]{acmart}
\AtBeginDocument{%
  }

\setcopyright{acmlicensed}
\copyrightyear{2018}
\acmYear{2018}
\acmDOI{XXXXXXX.XXXXXXX}
\acmConference[Conference acronym 'XX]{Make sure to enter the correct
  conference title from your rights confirmation email}{June 03--05,
  2018}{Woodstock, NY}
\acmISBN{978-1-4503-XXXX-X/2018/06}

\usepackage{booktabs}
\usepackage{arydshln}
\usepackage{xcolor}
\usepackage{balance}
\usepackage{mathtools}
\usepackage{url}
\usepackage{amsfonts}
\usepackage{algorithmic}
\usepackage{textcomp}
\usepackage{algorithm}
\usepackage{multirow}
\usepackage{color}
\usepackage{subcaption}
\usepackage{enumitem}
\usepackage{graphicx}  
\usepackage{xfp} % Add this to your preamble
\usepackage{amsmath}

\newtheorem{lemma}{Lemma}

\theoremstyle{definition}

\theoremstyle{remark}

\usepackage{tikz}
\usepackage{pgfplots,pgfplotstable}
\usepackage[table]{xcolor}

\def\TRANSPA{100}

\definecolor{rank1}{HTML}{FED990}
\definecolor{rank2}{HTML}{D0D0D6}
\definecolor{rank3}{HTML}{E9B5A5}
\definecolor{rank4}{HTML}{FFFFFF}
\definecolor{rank5}{HTML}{FFFFFF}
\definecolor{rank6}{HTML}{FFFFFF}
\colorlet{rank1}{rank1!\TRANSPA}
\colorlet{rank2}{rank2!\TRANSPA}
\colorlet{rank3}{rank3!\TRANSPA}
\colorlet{rank4}{rank4!\TRANSPA}
\colorlet{rank5}{rank5!\TRANSPA}
\colorlet{rank6}{rank6!\TRANSPA}
\cellcolor{rank1}
\cellcolor{rank2}
\cellcolor{rank3}
\cellcolor{rank4}
\cellcolor{rank5}
\cellcolor{rank6}

\definecolor{rotate1}{HTML}{073B4C}
\definecolor{rotate2}{HTML}{009999}
\definecolor{rotate3}{HTML}{6062B8}
\definecolor{rotate4}{HTML}{0099FF}
\definecolor{rotate5}{HTML}{2149C9}
\definecolor{complex1}{HTML}{996633}
\definecolor{complex2}{HTML}{808000}
\definecolor{complex3}{HTML}{FF9900}
\definecolor{complex4}{HTML}{A50021}
\definecolor{complex5}{HTML}{CCCC00}

\usepackage{pdfrender}
\newcommand*{\boldcheckmark}{%
  \textpdfrender{
    TextRenderingMode=FillStroke,
    LineWidth=.7pt, % half of the line width is outside the normal glyph
  }{\checkmark}%
}

\pgfplotsset{compat=1.5.1}
\def\addlegendimage{\csname pgfplots@addlegendimage\endcsname}
\usetikzlibrary{patterns}
\usetikzlibrary{pgfplots.groupplots}
\usetikzlibrary{
  pgfplots.colorbrewer,
}
\pgfplotsset{
  cycle list/.define={my marks}{
    every mark/.append style={solid,fill=\pgfkeysvalueof{/pgfplots/mark list fill}},mark=*\\
    every mark/.append style={solid,fill=\pgfkeysvalueof{/pgfplots/mark list fill}},mark=square*\\
    every mark/.append style={solid,fill=\pgfkeysvalueof{/pgfplots/mark list fill}},mark=triangle*\\
    every mark/.append style={solid,fill=\pgfkeysvalueof{/pgfplots/mark list fill}},mark=diamond*\\
  },
}
\definecolor{aa}{rgb}{0.2,0.7,0.310}
\definecolor{cc}{rgb}{0.914,0.725,0.431}
\definecolor{bb}{rgb}{0.514,0.325,0.831}

\definecolor{xxx}{rgb}{0.6,0.80,0.98}
\definecolor{oxx}{rgb}{0.26,0.52,0.80}
\definecolor{oox}{rgb}{0.01,0.53,0.82}
\definecolor{ooo}{rgb}{0.00,0.23,0.6}

\definecolor{ooo_2}{HTML}{9b2948}
\definecolor{oox_2}{HTML}{ff7251}
\definecolor{oxx_2}{HTML}{ffca7b}
\definecolor{xxx_2}{HTML}{ffedbf}

\definecolor{comp}{HTML}{45C5F4}
\definecolor{la}{HTML}{92D050}
\definecolor{send}{HTML}{92D050}
\definecolor{wait}{HTML}{FFCD33}
\definecolor{recv}{HTML}{8D59B3}

\definecolor{s8}{HTML}{FF5349}
\definecolor{s16}{HTML}{FFCD33}
\definecolor{s32}{HTML}{92D050}
\definecolor{s64}{HTML}{45C5F4}
\definecolor{s128}{HTML}{8D59B3}

\newcommand{\method}{\textsf{PROBE}}
\newcommand{\E}{\mathcal{E}_i}
\newcommand{\codeurl}{\url{https://github.com/potato2734/probe-kgc-evaluation}}
\begin{document}

%%
%% The "title" command has an optional parameter,
%% allowing the author to define a "short title" to be used in page headers.
\title[Generalized Rank-based Evaluation for Knowledge Graph Completion: Perspectives, Framework, and Analyses]{Generalized Rank-based Evaluation for Knowledge Graph Completion: \\Perspectives, Framework, and Analyses}

%%
%% The "author" command and its associated commands are used to define
%% the authors and their affiliations.
%% Of note is the shared affiliation of the first two authors, and the
%% "authornote" and "authornotemark" commands
%% used to denote shared contribution to the research.

% \author{Ben Trovato}
% \email{trovato@corporation.com}
% \orcid{1234-5678-9012}

% \author{G.K.M. Tobin}
% \authornotemark[1]
% \email{webmaster@marysville-ohio.com}
% \affiliation{%
%   \institution{Institute for Clarity in Documentation}
%   \city{Dublin}
%   \state{Ohio}
%   \country{USA}
% }

% \author{Lars Th{\o}rv{\"a}ld}
% \affiliation{%
%   \institution{The Th{\o}rv{\"a}ld Group}
%   \city{Hekla}
%   \country{Iceland}
%   }
% \email{larst@affiliation.org}

\author{Sooho Moon}
\orcid{0009-0001-1765-7688}
\affiliation{%
  \institution{Chung-Ang University}
  \city{Seoul}
  \country{South Korea}
}
\email{moonwalk725@cau.ac.kr}

\author{Jian Kang}
\orcid{0000-0003-3902-7131}
\affiliation{%
  \institution{Mohamed bin Zayed University of Artificial Intelligence}
  \city{Abu Dhabi}
  \country{UAE}
}
\email{jian.kang@mbzuai.ac.ae}

\author{Yunyong Ko}
\orcid{0000-0003-1283-4697}
\authornote{Corresponding author.}
\affiliation{%
  \institution{Chung-Ang University}
  \city{Seoul}
  \country{South Korea}
}
\email{yyko@cau.ac.kr}

% \author{Huifen Chan}
% \affiliation{%
%   \institution{Tsinghua University}
%   \city{Haidian Qu}
%   \state{Beijing Shi}
%   \country{China}}

% \author{Charles Palmer}
% \affiliation{%
%   \institution{Palmer Research Laboratories}
%   \city{San Antonio}
%   \state{Texas}
%   \country{USA}}
% \email{cpalmer@prl.com}

% \author{John Smith}
% \affiliation{%
%   \institution{The Th{\o}rv{\"a}ld Group}
%   \city{Hekla}
%   \country{Iceland}}
% \email{jsmith@affiliation.org}

% \author{Julius P. Kumquat}
% \affiliation{%
%   \institution{The Kumquat Consortium}
%   \city{New York}
%   \country{USA}}
% \email{jpkumquat@consortium.net}

%%
%% By default, the full list of authors will be used in the page
%% headers. Often, this list is too long, and will overlap
%% other information printed in the page headers. This command allows
%% the author to define a more concise list
%% of authors' names for this purpose.
\renewcommand{\shortauthors}{Moon et al.}

%%
%% The abstract is a short summary of the work to be presented in the
%% article.
%%

\definecolor{aa}{rgb}{0.2,0.7,0.310}
\definecolor{cc}{rgb}{0.914,0.725,0.431}
\definecolor{bb}{rgb}{0.514,0.325,0.831}

\definecolor{xxx}{rgb}{0.6,0.80,0.98}
\definecolor{oxx}{rgb}{0.26,0.52,0.80}
\definecolor{oox}{rgb}{0.01,0.53,0.82}
\definecolor{ooo}{rgb}{0.00,0.23,0.6}

\definecolor{ooo_2}{HTML}{9b2948}
\definecolor{oox_2}{HTML}{ff7251}
\definecolor{oxx_2}{HTML}{ffca7b}
\definecolor{xxx_2}{HTML}{ffedbf}

\definecolor{comp}{HTML}{45C5F4}
\definecolor{la}{HTML}{92D050}
\definecolor{send}{HTML}{92D050}
\definecolor{wait}{HTML}{FFCD33}
\definecolor{recv}{HTML}{8D59B3}

\definecolor{s8}{HTML}{FF5349}
\definecolor{s16}{HTML}{FFCD33}
\definecolor{s32}{HTML}{92D050}
\definecolor{s64}{HTML}{45C5F4}
\definecolor{s128}{HTML}{8D59B3}

\definecolor{rotate}{HTML}{073b4c}
\definecolor{complex}{HTML}{118ab2}
\definecolor{house}{HTML}{06d6a0}
\definecolor{tucker}{HTML}{FFB715}
\definecolor{plogicnet}{HTML}{F78C6B}
\definecolor{rnnlogic}{HTML}{ef476f}

\begin{abstract}
Knowledge graph completion (KGC) aims to predict missing facts from an observed knowledge graph (KG), playing a crucial role in a wide range of real-world applications such as drug discovery, recommender systems, and retrieval-augmented generation (RAG).
Although numerous KGC models have been proposed, the evaluation of KGC remains underexplored, despite its critical role in reliably assessing model performance and selecting appropriate models for real-world applications.
In this paper, we introduce two important perspectives for KGC evaluation that are overlooked by existing evaluation metrics,
\textbf{(P1)} \textit{predictive sharpness} — the degree of strictness in penalizing inaccurate predictions — and \textbf{(P2)} \textit{popularity-bias robustness} — the ability to evaluate predictions in a popularity-aware manner by accounting for both entity and relation popularity bias.
To address both perspectives, we propose a generalized evaluation framework, \textbf{{\method}}, 
which consists of a rank transformer (RT) that estimates the score of each prediction based on a desired level of predictive sharpness and a rank aggregator (RA) that determines the final evaluation score by aggregating all prediction scores according to a desired level of popularity-bias robustness.
We theoretically analyze {\method} by defining six key properties for reliable KGC evaluation and prove that {\method} satisfies all the properties, while existing metrics fail to satisfy some.
In particular, due to the open-world nature of KGs, an evaluation metric should preserve the relative performance of KGC models even when only incomplete facts are observed.
We show that {\method} better maintains such consistency, providing a more reliable estimate of intrinsic model performance than existing metrics.
Extensive experiments with six KGC models on six real-world KGs reveal that existing metrics may over- or under-estimate model performance depending on different evaluation perspectives, whereas {\method} enables a more comprehensive, flexible, and consistent evaluation of KGC models.
Our code and datasets are available at {\codeurl}.
\end{abstract}

%%
%% The code below is generated by the tool at http://dl.acm.org/ccs.cfm.
%% Please copy and paste the code instead of the example below.
%%
\begin{CCSXML}
<ccs2012>
   <concept>
       <concept_id>10002951.10003227.10003351</concept_id>
       <concept_desc>Information systems~Data mining</concept_desc>
       <concept_significance>500</concept_significance>
       </concept>
   <concept>
       <concept_id>10010147.10010178.10010187</concept_id>
       <concept_desc>Computing methodologies~Knowledge representation and reasoning</concept_desc>
       <concept_significance>500</concept_significance>
       </concept>
 </ccs2012>
\end{CCSXML}

\ccsdesc[500]{Information systems~Data mining}
\ccsdesc[500]{Computing methodologies~Knowledge representation and reasoning}

%%
%% Keywords. The author(s) should pick words that accurately describe
%% the work being presented. Separate the keywords with commas.
\keywords{knowledge graphs, knowledge graph completion, rank-based evaluation, open-world assumption}
%% A "teaser" image appears between the author and affiliation
%% information and the body of the document, and typically spans the
%% page.

%%
%% This command processes the author and affiliation and title
%% information and builds the first part of the formatted document.
\maketitle

\section{Introduction}\label{sec:intro} % 2
A knowledge graph (KG) is a graph-structured representation of real-world knowledge, 
where an entity is represented as a node and a relation between two entities is represented as an edge in the form of a triple $(h,r,t)$. 
KGs~\cite{bollacker2008freebase,suchanek2007yago,miller1995wordnet,carlson2010toward,safavi2020codex} have been widely used in a wide range of applications such as question answering (QA)~\cite{hao2017end,yih2014semantic,huang2019kgqa-www,zhang2018kgqa-aaai,yasunaga2021kgqa-acl}, news classification~\cite{ko2023khan,feng2021nclass-arxiv,zhang2022nclass-naacl}, recommender systems~\cite{wang2018rec-www,wang2019rec-www,zhou2020rec-sigir}, drug discovery~\cite{zhang2021drug,zeng2022toward,bonner2022review,lin2020drug-ijcai}, and retrieval-augmented generation (RAG)~\cite{edge2024local,guo2020survey,pan2024unifying,chen2025pathrag,guo2024lightrag}. 
For example, in RAG, a KG can be used to better understand a user query and retrieve more query-relevant information, 
thereby enabling a large language model (LLM) to generate high-quality responses~\cite{luo2025hypergraphrag,matsumoto2024bioRAG,jiang2025medRAG}. 
In the pharmaceutical domain, biomedical KGs can be applied to drug discovery by providing useful information about complex relationships among diseases, genes, and compounds, which significantly reduces the cost and time required to develop new medications~\cite{zeng2022toward}.

Real-world KGs, however, are inherently incomplete~\cite{trouillon2016complex,sun2019rotate}, 
i.e., a number of facts are missing. 
This fundamental limitation can hinder the potential of KGs in real-world applications. 
To address this limitation, \textit{knowledge graph completion} (KGC) has been widely studied~\cite{bordes2013translating,yang2015embeddingentitiesrelationslearning,trouillon2016complex,sun2019rotate,qu2020rnnlogic,qu2019probabilistic,qi2024bi,liu2021indigo,zhang2022redgnn},
which aims to infer missing facts based on the observed KG structure.
Specifically, given a KG, it predicts the missing entity when either the head or the tail entity is unknown, i.e., in the form of $(h,r,?)$ or $(?,r,t)$. 
Despite recent breakthroughs in KGC models, little attention has been paid to the \textit{evaluation} of KGC models. 
Without appropriate evaluation, a suboptimal model may be selected, leading to low-quality knowledge and degraded performance in downstream tasks.

\begin{figure}[t]
    \centering
    \includegraphics[width=0.75\linewidth]{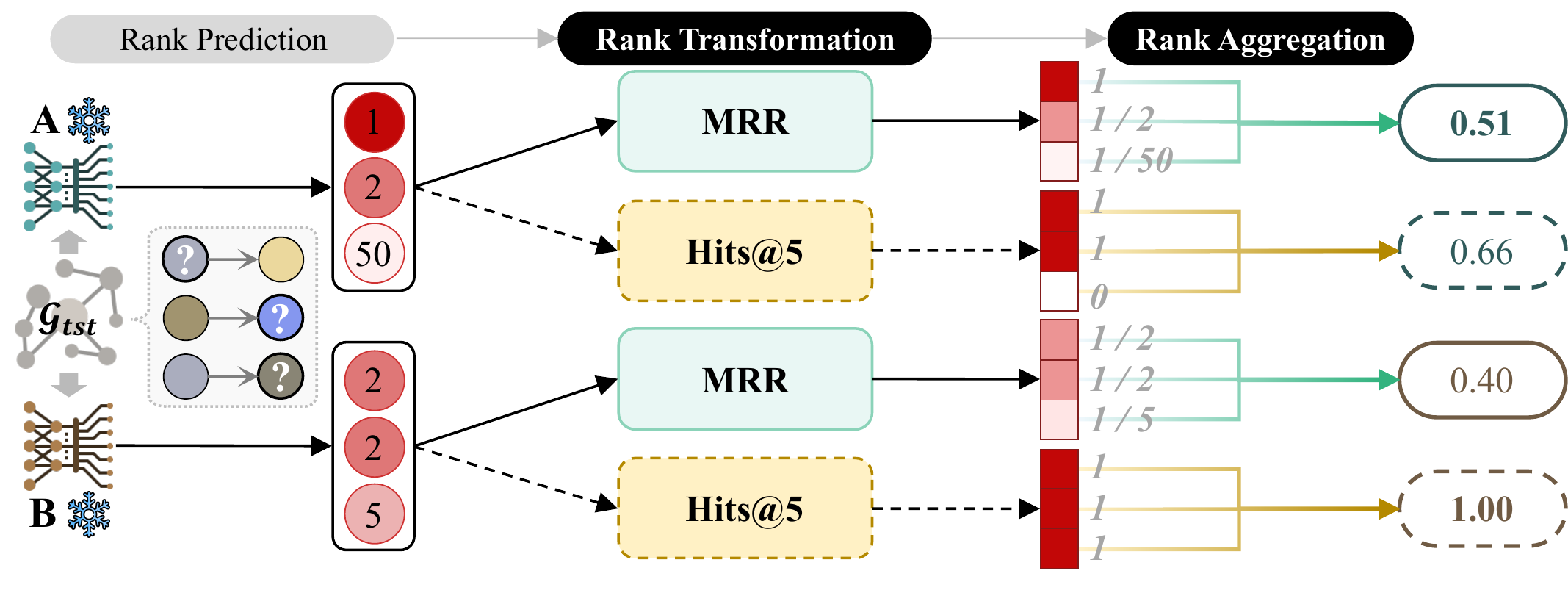}
    \vspace{-3mm}
    \caption{Rank-based evaluation protocol for knowledge graph completion: (1) prediction, (2) transformation, and (3) aggregation.}
    \vspace{-3mm}
    \label{fig:rank-based-evaluation}
\end{figure}

Motivated by this, we revisit the evaluation protocol for KGC models.
To evaluate KGC models, \textit{rank-based} evaluation metrics (e.g., mean rank (MR), mean reciprocal rank (MRR), and Hits@K) are commonly adopted~\cite{hoyt2022unified,yang2022rethinking,mohamed2020popularity,bordes2013translating,yang2015embeddingentitiesrelationslearning,moon2025sharp} due to the \textit{open-world assumption} (OWA)~\cite{yang2022rethinking,hoyt2022unified},
where the absence of a triple in a KG does not necessarily imply that it is false.
Specifically, the rank-based evaluation process is as follows:
Given a trained KGC model and a set of test triples, 
\textbf{(1)} (\textit{Prediction}) it computes the probability that each candidate entity is the missing entity for each triple – predicting the missing entity in the form of $(h,r,?)$ or $(?,r,t)$ – and ranks the candidates based on their probabilities; 
\textbf{(2)} (\textit{Transformation}) it transforms the rank of the missing entity into a prediction score; 
and \textbf{(3)} (\textit{Aggregation}) it aggregates the scores of all predictions to compute the final accuracy. 
Depending on the evaluation metric, the specific procedures for the (2) transformation and (3) aggregation steps can vary.

\vspace{1mm}
\noindent
\textbf{Examples.} 
As illustrated in Figure~\ref{fig:rank-based-evaluation}, 
consider two KGC models A and B evaluated on three test triples.
Suppose that model A ranks the target entities at $[1,2,50]$, whereas model B ranks them at $[2,2,5]$.
From the perspective of the evaluation procedure, 
their difference arises from how ranks are transformed into scores and aggregated.
For MRR, each rank is transformed using the reciprocal function $f(r)=\frac{1}{r}$.
Thus, the transformed scores are $\left[1, \frac{1}{2}, \frac{1}{50}\right]$ for model A and $\left[\frac{1}{2}, \frac{1}{2}, \frac{1}{5}\right]$ for model B.
In the aggregation step, the final score is computed as the average of the transformed scores:
$\mathrm{MRR}_A = \frac{1}{3} \left( 1 + \frac{1}{2} + \frac{1}{50} \right)=0.51,$ and $\mathrm{MRR}_B = \frac{1}{3} \left( \frac{1}{2} + \frac{1}{2} + \frac{1}{5} \right)=0.40$.
For Hits@5, the transformation step assigns a binary score depending on whether the rank of the target entity is within the top 5 predictions:
$f(r) = 1 \text{ if } r \le 5, \text{ otherwise } 0$.
Accordingly, the transformed scores of model A and model B are [1,1,0] and [1,1,1], respectively.
In the aggregation step, the final accuracy is computed as the average of the transformed scores:
$\mathrm{Hits@5}_A = \frac{1}{3} (1+1+0)=0.66$ and $\mathrm{Hits@5}_B = \frac{1}{3} (1+1+1)=1.00$.
This example shows that the preferred model depends on the evaluation metric:
MRR favors model A due to its stronger emphasis on \textit{top-ranked} predictions, 
whereas Hits@5 favors model B because it considers only whether the correct entity appears within the top-$K$ results.

\begin{figure}[t]
    \centering
    \includegraphics[width=0.98\linewidth]{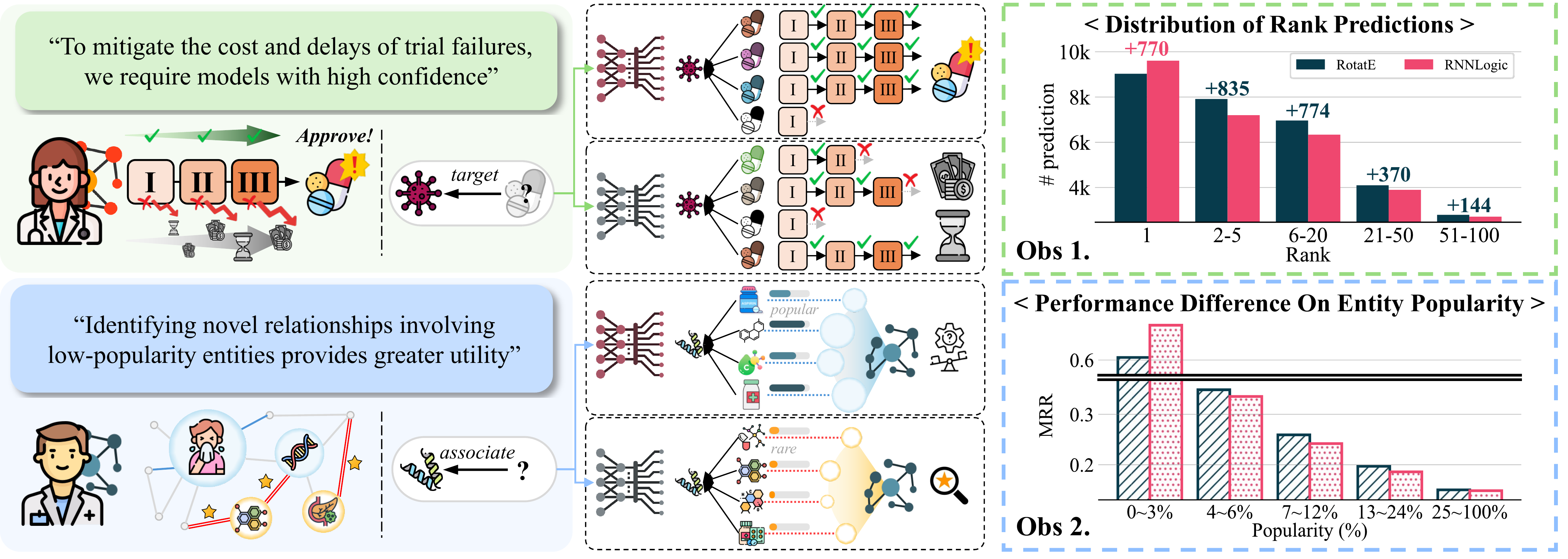}
    \vspace{-3mm}
    \caption{Motivating examples illustrating the importance of \textbf{(P1)} predictive sharpness and \textbf{(P2)} popularity-bias robustness in KGC evaluation. Depending on the desired evaluation perspective, different KGC models may be preferred.}
    \vspace{-3mm}
    \label{fig:motivation}
\end{figure}

\subsection{Motivation}
However, we observe that existing rank-based evaluation metrics overlook two subtle yet important perspectives for evaluating KGC models: (P1) \textit{predictive sharpness} and (P2) \textit{popularity-bias robustness}.
We present motivating examples and conduct preliminary experiments to justify the importance of these perspectives for KGC evaluation.

\vspace{1mm}
\noindent
\textbf{(P1)} \textbf{Predictive sharpness.} 
Naturally, we may answer the question “\textit{how strictly should we evaluate each individual prediction?}" differently depending on the application.
For example, when biomedical KGs are used for drug discovery, 
inaccurate (i.e., less faithful) predictions may lead to serious risks, 
such as harmful side effects, reduced effectiveness, and substantial clinical trial costs~\cite{zeng2022toward,vella2022medtrialcost}.
Thus, in this case, as predictions become less accurate, a larger penalty should be imposed on the KGC model.
In other words, a high level of \textit{predictive sharpness} is required in evaluating KGC models.
In contrast, when commonsense KGs are used in recommender systems, 
a relatively low level of predictive sharpness may be acceptable as long as the newly predicted facts are beneficial for better understanding users and items.

Existing metrics, however, do not consider predictive sharpness in KGC evaluation.
To verify our claim, 
we analyze the distribution of predictions from two state-of-the-art KGC models (RotatE~\cite{sun2019rotate} and RNNLogic~\cite{qu2020rnnlogic}).
As shown in Figure~\ref{fig:motivation}, RNNLogic produces more 1st-rank predictions, whereas RotatE performs better across ranks 2–100.
This implies that RNNLogic (resp. RotatE) is preferable when a high (resp. low) level of predictive sharpness is required.
However, existing metrics consistently assign higher scores to RNNLogic than RotatE.

% \begin{figure}[t]
% \centering
% \begin{tabular}{cc}
%     \includegraphics[width=0.465\linewidth]{figures/FB15k237_rank_hist_RotatE_RNNLogic.pdf} &
%     \includegraphics[width=0.465\linewidth]{figures/FB15k237_bar_RotatE_RNNLogic.pdf} \\
%     (a) Observation 1 & (b) Observation 2
% \end{tabular}
% \vspace{-3mm}
% \caption{Observations: Existing rank-based metrics have overlooked two subtle yet critical perspectives (P1) and (P2).}
% \vspace{-5mm}
% \label{fig:aspect1}
% \end{figure}

\vspace{1mm}
\noindent
\textbf{(P2)} \textbf{Popularity-bias robustness.} 
Real-world graphs generally follow a power-law degree distribution~\cite{wang2019tackling,xiong2018oneshot,ko2021mascot,jang2023sage}, meaning that most entities appear in only a few triples (low popularity), while a small number of entities appear in a large number of triples (high popularity). 
Thus, KGC models may exhibit \textit{popularity bias}, since high-popularity entities and relations are used much more frequently during training.
From an application perspective, the robustness of KGC models to popularity bias is often important in~\cite{bonner2022implications,ye2021medcoldstart}.
For example, in biomedical KGs for drug discovery, practitioners are typically more interested in discovering new facts associated with low-popularity entities and relations (e.g., rare diseases, less-investigated genes, or novel compounds), rather than repeatedly confirming well-known facts associated with high-popularity ones~\cite{bang2023biomedical,zhu2023mednewdisease,shomer2023toward}.
Thus, a high level of \textit{popularity-bias robustness} is required in KGC evaluation.

However, existing metrics fail to reflect this perspective and tend to overestimate KGC models that perform well on high-popularity triples.
To verify this, we measure the KGC accuracy (MRR) of RotatE and RNNLogic across different popularity levels.
As shown in Figure~\ref{fig:motivation}, 
both models achieve much higher accuracy on high-popularity triples, confirming strong popularity bias.
Interestingly, although RotatE outperforms RNNLogic across most popularity groups, RNNLogic achieves higher overall accuracy, indicating that existing metrics fail to properly reflect robustness to popularity bias in KGC evaluation.

\subsection{Our Work}
To reflect both perspectives \textbf{(P1)} and \textbf{(P2)}, we propose a generalized framework for KGC evaluation, 
named \textbf{\underline{P}}redictive sha\textbf{\underline{R}}pness and p\textbf{\underline{O}}pularity-\textbf{\underline{B}}ias robustness aware \textbf{\underline{E}}valuation (\textbf{{\method}}).
{\method} consists of a rank transformer (RT) and a rank aggregator (RA). 
For each test triple, the RT transforms the rank of the missing entity predicted by a KGC model into a score, based on the required level of predictive sharpness \textbf{(P1)}. 
Then, the RA assigns a weight to each test triple based on the desired level of popularity-bias robustness \textbf{(P2)}, 
and computes the final accuracy score by taking the weighted average of all transformed scores.
In particular, for \textbf{(P2)}, we observe that popularity bias varies across entities and relations.
This is because the popularity of a triple is jointly determined by entities and relations, i.e., even a high-popularity entity may rarely appear with a specific relation, and vice versa.
Based on this observation, {\method} separately measures \textbf{(P2)-(i)} entity popularity bias and \textbf{(P2)-(ii)} relation popularity bias for each test triple, and determines the final weight based on the two bias measures.
Note that existing metrics can be interpreted as special cases of {\method} under specific configurations of RT and RA.

In addition, we theoretically analyze that {\method} satisfies six key properties required for reliable KGC evaluation,
while existing metrics fail to satisfy some of them.
Importantly, under the open-world nature of KGs, accurately assessing the intrinsic performance of a KGC model is a critical requirement for evaluation metrics, i.e., a metric should preserve the relative performance of models even when only a partial set of facts is observed.
In other words, if two models have similar performance in the open world, their performance should remain consistent when evaluated in a closed-world setting where all facts are observable.
In this regard, we show that {\method} provides a reliable estimate of the intrinsic performance of KGC models in open-world settings.

Through comprehensive experiments with six KGC models on six real-world KGs,
we reveal the following findings.
\textbf{(1)} Existing rank-based metrics implicitly assume a high level of predictive sharpness, favoring KGC models that produce top-ranked predictions while underestimating those that produce imperfect yet high-quality predictions across broader rank ranges (e.g., 2nd--5th ranks).
\textbf{(2)} KGC models highly evaluated by existing metrics tend to exhibit strong popularity bias. 
Such models often perform poorly on low-popularity triples. 
These findings reveal inherent limitations of existing rank-based metrics in evaluating KGC models, as they overlook the crucial perspectives of predictive sharpness (P1) and popularity-bias robustness (P2).
In contrast, \textbf{(3)} {\method} enable more comprehensive evaluation of KGC models by flexibly reflecting different levels of predictive sharpness and popularity-bias robustness.
Finally, \textbf{(4)} {\method} consistently preserves the relative performance of KGC models across open-world and closed-world settings, demonstrating reliable evaluation under the open-world assumption.

\subsection{Contributions}
The main contributions of this work are summarized as follows.
\begin{itemize}
    \item \textbf{New Perspectives}: We introduce two key perspectives for KGC evaluation, \textbf{(P1)} predictive sharpness and \textbf{(P2)} popularity-bias robustness, which capture important aspects of KGC model performance overlooked by existing evaluation metrics.
    \item \textbf{Generalized Framework}: We propose a generalized evaluation framework, \textbf{{\method}}, that comprehensively evaluates KGC models by flexibly incorporating different levels of \textbf{(P1)} predictive sharpness and \textbf{(P2)} popularity-bias robustness.
    \item \textbf{Theoretical Analysis}: We provide a theoretical analysis showing that {\method} satisfies six key properties required for reliable KGC evaluation. We further analyze model evaluation under open-world and closed-world settings, demonstrating that {\method} preserves the relative performance of KGC models across the two settings.    
    \item \textbf{Comprehensive Experiments}: Through extensive experiments using six KGC models on six real-world KGs, we quantitatively and qualitatively demonstrate that {\method} provides more reliable and consistent evaluation results, particularly in preserving the relative performance of KGC models under the open-world assumption.
\end{itemize}

The paper is organized as follows. 
Section~\ref{sec:related} briefly reviews recent KGC models and existing evaluation metrics for KGC models. 
Section~\ref{sec:proposed} describes our evaluation framework, {\method}. 
Section~\ref{sec:analysis} theoretically analyzes key properties of {\method} and the ability to assess model performance under the open-world setting. 
Section~\ref{sec:eval} presents the experimental setup and results with in-depth analysis. 
Finally, Section~\ref{sec:con} concludes this paper.

% \begin{table}[t]
% \centering
% \caption{Notations and their descriptions}
% \setlength\tabcolsep{6pt}
% \begin{tabular}{cl}
% \toprule
%  \textbf{Notation} & \textbf{Description}\\
% \midrule

% $\mathcal{G}$ & a knowledge graph  \\
% $\mathcal{E}$ & a set of entities   \\
% $\mathcal{R}$ & a set of relations  \\
% $\mathcal{T}$ & a set of triples  \\

% \midrule
% % $\mathcal{D}$ & training, validation, and test sets  \\
% $\theta(\cdot)$ & a scoring function of a KGC model\\
% $f(\cdot)$ & a rank transformer function \\
% % $f^{*}(\cdot)$ & an affine transformed rank transformer function \\
% $agg(\cdot)$ & a rank aggregation function \\
% $w(\cdot)$ & an unit weight function \\
% % $N^{trn}_e$ & degree of entity e in the train graph \\
% % $N^{tst}_e$ & degree of entity e in the test graph \\
% \midrule
% $\mathbf{r}$ & a set of ranks\\
% $\mathbf{c}$ & a set of transformed scores\\
% $\mathbf{w}$ & a set of weights\\
% \midrule
% $\alpha$ & a factor of predictive sharpness \\
% $\beta$ & a factor of popularity-bias robustness\\
% % $W$ & weight partition function \\

% \bottomrule
% \end{tabular}
% \label{table:notations}
% \end{table}

% \begin{figure*}[t]
% \centering
% \includegraphics[width=1.0\textwidth]{figures/overview.pdf}
% \vspace{-2mm}
% \caption{Overview of {\method}, which consists of (1) prediction, (2) transformation, and (3) aggregation.}
% \vspace{-2mm}
% \label{fig:overview}
% \end{figure*}

\section{Related Work}\label{sec:related} % 2.5
In this section, we review common protocols and rank-based metrics for KGC evaluation, and introduce recent knowledge graph completion (KGC) models.

\subsection{Evaluation of Knowledge Graph Completion}
KGC models are commonly evaluated by rank-based metrics such as mean rank (MR), mean reciprocal rank (MRR), and Hits@K.
Despite their widespread adoption, these metrics have several limitations, 
including sensitivity to the open-world assumption, inability to account for popularity bias, and difficulty in comparing results across datasets.
To address these limitations, a handful of studies have revisited rank-based evaluation for KGC.
\citet{yang2022rethinking} showed that existing metrics yield inconsistent results under the open-world assumption and proposed alternative variants such as log-MRR and p-MRR.
However, these metrics do not address popularity bias.
\citet{mohamed2020popularity} introduced stratified metrics (e.g., strat-MRR and strat-Hits@K) that assign different weights to test triples based on entity and relation popularity.
While effective for mitigating popularity bias, they do not control the level of predictive sharpness.
\citet{berrendorf2020ambiguity} observed that rank-based scores are generally comparable only within the same dataset, since rank interpretation depends on the number of candidate entities.
They proposed the adjusted mean rank index (AMRI), a normalized version of MR that enables meaningful comparison across datasets.
\citet{hoyt2022unified} further extended AMRI to z-adjusted MRR (ZMRR), which normalizes scores using the mean and variance of the rank distribution.
Similarly, \citet{tiwari2021revisiting} proposed weighted geometric mean rank (WMR) that assigns higher importance to predictions made over larger candidate sets,
thereby reflecting the intuition that achieving the same rank in a more difficult setting should be more highly rewarded.

Those rank-based metrics are commonly computed under the filtered setting~\cite{bordes2013translating}, 
which removes candidate entities that correspond to other true triples when ranking the target entity. 
This prevents unfair penalization due to the incompleteness of real-world KGs, 
where multiple entities may form valid triples for the same query.
Another important protocol concerns tie-breaking~\cite{sun2020reevaluation}. 
Since multiple entities may receive identical scores from a KGC model, assigning the best possible rank to the target entity can lead to overly optimistic evaluation. 
To address this issue, ~\citet{sun2020reevaluation} proposed a tie-breaking protocol to assign the average rank within the tied group as the final rank, preventing expected overestimation of model performance.

Recent work~\cite{moon2025sharp} introduced two important perspectives for KGC evaluation: \textbf{(P1)} predictive sharpness and \textbf{(P2)} popularity-bias robustness.
The work proposed a new evaluation framework reflecting these two perspectives and analyzed several limitations of existing rank-based metrics.
However, its theoretical analysis was limited to only two properties, i.e., fixed optimum and fixed pessimum, without investigating the reliability of KGC evaluation under the open-world assumption (OWA).
In addition, popularity-bias robustness was modeled only using the frequency of the missing entity, limiting its ability to capture popularity bias induced by entity-relation interactions.
The experimental evaluation was also limited to four KGC models and two KGs.
In this paper, we substantially extend~\cite{moon2025sharp} in several directions.
First, we extend popularity-bias robustness by incorporating both entity popularity and entity-conditioned relation popularity.
Second, we provide more in-depth theoretical analysis on six key properties for reliable KGC evaluation as well as consistency under OWA.
Third, we conduct comprehensive experiments using six KGC models on six real-world KGs under diverse evaluation perspectives.

% To the best of our knowledge, 
% {\method} is the first generalized KGC evaluation framework to incorporate the two key perspectives, \textbf{(P1)} predictive sharpness and \textbf{(P2)} popularity-bias robustness.
% Existing metrics can be interpreted as special cases of {\method} under specific parameter settings corresponding to fixed levels of predictive sharpness and popularity-bias robustness.
% In addition, we provide theoretical analysis of the key properties of {\method} and compare them with those of existing metrics in Section~\ref{sec:analysis}.

\subsection{Knowledge Graph Completion Models}
Embedding-based approaches~\cite{bordes2013translating,sun2019rotate,li2022house,yang2015embeddingentitiesrelationslearning,trouillon2016complex,balazevic2019tucker} represent entities and relations as vectors in a latent space, preserving the semantic meaning of triples through algebraic operations.
TransE~\cite{bordes2013translating} models each relation as a translation vector such that the head entity translated by the relation is close to the tail entity.
RotatE~\cite{sun2019rotate} represents relations as rotations in complex space to capture diverse relation patterns such as symmetry and inversion.
HousE~\cite{li2022house} employs Householder transformations to model both rotation and projection for more expressive relations.
DistMult~\cite{yang2015embeddingentitiesrelationslearning} adopts a bi-linear scoring function with diagonal matrices to model symmetric relations.
ComplEx~\cite{trouillon2016complex} extends DistMult to complex-valued embeddings to capture asymmetric relations.
TuckEr~\cite{balazevic2019tucker} applies Tucker tensor decomposition to model rich interactions between entities and relations.

Rule-based approaches~\cite{yang2017differentiable,qu2019probabilistic,qu2020rnnlogic} learn logical patterns from relational paths (i.e., sequences of relations), enabling interpretable reasoning beyond observed triples.
NeuralLP~\cite{yang2017differentiable} adopts a recurrent architecture with attention and auxiliary memory to learn variable-length logical rules.
pLogicNet~\cite{qu2019probabilistic} employs a probabilistic logic neural network to learn logical rules while modeling their uncertainty.
RNNLogic~\cite{qu2020rnnlogic} treats logical rules as latent variables and jointly optimizes a rule generator and a reasoning predictor using the EM algorithm.
In addition, various deep learning-based approaches have also been studied~\cite{schlichtkrull2018modeling,vashishth2019composition,zhu2021neural,zhang2022redgnn,sun2024mlsaa}.
R-GCN~\cite{schlichtkrull2018modeling} extends graph convolutional networks (GCN) by applying relation-specific transformations.
CompGCN~\cite{vashishth2019composition} jointly learns entity and relation representations via compositional message passing.
NBFNet~\cite{zhu2021neural} applies a neural Bellman-Ford framework to encode path-based relational information.
RED-GNN~\cite{zhang2022redgnn} optimizes propagation in GNNs via dynamic programming for improved efficiency.
MLSAA~\cite{sun2024mlsaa} enhances inductive capability by integrating pretrained language models with graph neural networks through adaptive aggregation and multi-level sampling.

% In this paper, we comprehensively evaluate six state-of-the-art KGC models in six real-world KGs with different levels of predictive sharpness and popularity-bias robustness in Section~\ref{sec:eval}.

% Despite recent advances in KGC models, 
% they have focused primarily on learning representations from KGs for inferring missing facts, 
% while paying limited attention to \textit{how KGC models should be evaluated}. 
% Consequently, existing KGC approaches commonly rely on standard rank-based evaluation metrics such as mean rank (MR), mean reciprocal rank (MRR), and Hits@K without considering diverse evaluation perspectives.

\section{Proposed Framework: {\method}}\label{sec:proposed}
In this section, we present a generalized framework for KGC evaluation, 
named \textbf{{\method}} (\textbf{\underline{P}}redictive sha\textbf{\underline{R}}pness and p\textbf{\underline{O}}pularity-\textbf{\underline{B}}ias robustness aware \textbf{\underline{E}}valuation.
First, we introduce the notations used in this paper and formulate the problems that we consider. 
Then, we describe two key components of {\method}: 
a rank transformer (Section~\ref{sec:proposed-rt}) and a rank aggregator (Section~\ref{sec:proposed-ra}).
Finally, we present a geometric interpretation of {\method} (Section~\ref{sec:proposed-interpret}).

\begin{table}[t]
\caption{Notations and their descriptions}
\setlength\tabcolsep{7pt}% column space
\def\arraystretch{1.0} % row space
\centering
\begin{tabular}{cl}
\toprule
\textbf{Notation} & \textbf{Description} \\
\midrule
$\mathcal{G}=(\mathcal{E},\mathcal{R},\mathcal{T})$ & Knowledge graph consisting of entities, relations, and triples \\
$\mathcal{E}, \mathcal{R}, \mathcal{T}$ & Sets of entities, relations, and triples \\
$(h,r,t)$ & Triple with head entity $h$, relation $r$, and tail entity $t$ \\
$n$ & Number of test triples times two ($n=|\mathcal{T}_{test}|*2$) \\
$\theta(\cdot)$ & KGC model that assigns a score to a triple \\
$f(\cdot), agg(\cdot)$ & Rank transformation and aggregation functions \\
$r \in \mathbb{N}$ & Rank of the correct entity among candidate entities \\
$\mathbf{r} = [r_1,\dots,r_n]$, $\mathbf{c} = [c_1,\dots,c_n]$  & Rank vector and transformed score vector \\
$\alpha$, $\beta$ & Parameters controlling predictive sharpness and popularity-bias robustness \\ 
$\delta_e$, $\delta_{r|e}$ & Entity popularity and entity-conditioned relation popularity \\
${w}_e,{w}_{r|e}$ & Entity weight and entity-conditioned relation weight \\
$\textbf{w}=[w_{1},\dots,w_{n}]$ & Weight vector for test triples \\
% $\textbf{w}_e=[w_{e_1},\dots,w_{e_Q}],\textbf{w}_{r|e}=[w_{{r|e}_1},\dots,w_{{r|e}_Q}]$ & Sets of entity and reltioan weights \\
\bottomrule
\end{tabular}
\label{table:notations}
\end{table}

\subsection{Notations and Problem Formulation}
The notations used in this paper are described in Table~\ref{table:notations}.
This work focuses on \textit{the evaluation of knowledge graph completion (KGC)}.
Given $k$ candidate models, each model is first trained to solve the KGC task, and then evaluated to determine which model more accurately predicts missing facts.
To formalize this process, we consider the following two related problems: (1) knowledge graph completion and (2) rank-based KGC evaluation.

\vspace{1mm}
\noindent
\textbf{\textsc{Problem 1}} (\textsc{Knowledge Graph Completion}).
Given a knowledge graph (KG) $\mathcal{G}=(\mathcal{E},\mathcal{R},\mathcal{T})$,
where $\mathcal{E}$ is the set of entities, $\mathcal{R}$ is the set of relations, and $\mathcal{T}=\{((h,r,t)| h,t \in \mathcal{E}, r \in \mathcal{R}\}$ is the set of triples (i.e., facts),
the goal of knowledge graph completion (KGC) is to infer missing facts based on the observed KG $\mathcal{G}$.

\vspace{1mm}
Given a KGC model $\theta(h,r,t)$, 
its model parameters are typically trained based on the objective that encourage positive triples to obtain higher scores than negative triples generated through negative sampling. 
To achieve this objective, various loss functions have been used, including margin-based ranking loss~\cite{bordes2013translating}, logistic loss~\cite{trouillon2016complex,li2022house}, and cross-entropy loss~\cite{trouillon2016complex,balazevic2019tucker,qu2020rnnlogic}.

\vspace{1mm}
\noindent
\textbf{\textsc{Problem 2}} (\textsc{Rank-based KGC Evaluation}).
Given a trained KGC model $\theta(\cdot)$ and a set of test triples $\mathcal{T}_{test}$, 
the goal of rank-based KGC evaluation is to measure how accurately the model predicts missing entities for each triple (e.g., $(h,r,?)$ or $(?,r,t)$) under the open-world assumption.
Specifically, rank-based evaluation assesses model performance based on how highly the correct entity is ranked relative to other candidate entities across test triples, where candidates are ordered according to the scores produced by the model $\theta(\cdot)$.

Thus, an effective rank-based metric should appropriately reflect the relative ordering of predicted entities in the open-world setting and provide reliable assessment of model performance across different prediction scenarios.

\begin{figure}[t]
    \centering
    \includegraphics[width=1.0\linewidth]{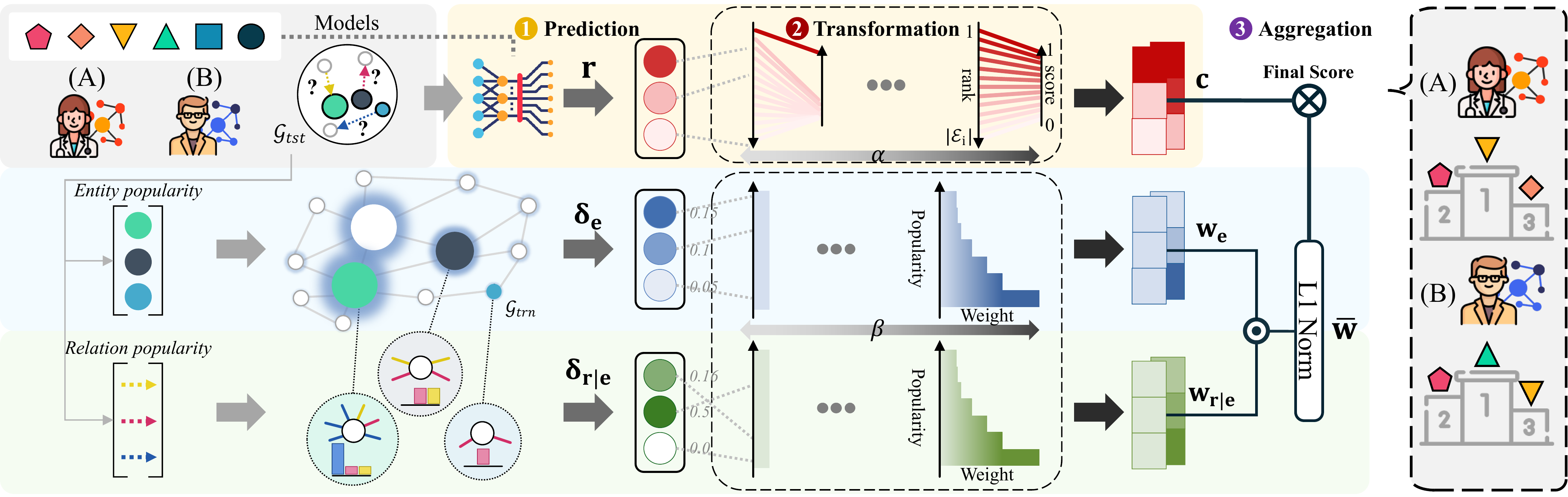}
    \vspace{-5mm}
    \caption{Overview of {\method}: (1) (\textit{prediction}) generating ranks for test triples using a KGC model; (2) (\textit{transformation}) converting ranks into scores via a predictive sharpness-aware function controlled by $\alpha$; and (3) (\textit{aggregation}) computing the final evaluation score using popularity-aware weights that capture triple-level popularity controlled by $\beta$.}
    \vspace{-2mm}
    \label{fig:overview}
\end{figure}

\vspace{1mm}
\noindent
\textbf{Overview of {\method}.}
Given a trained KGC model $\theta(\cdot)$ and a set of test triples $\mathcal{T}_{test}$,
{\method} evaluates the model $\theta$ based on the rank-based evaluation protocol~\cite{sun2020reevaluation,qu2020rnnlogic,li2022house,bordes2013translating}. 
As illustrated in Figure~\ref{fig:overview}, {\method} consists of the following three steps: (1) prediction, (2) transformation, and (3) aggregation.

\vspace{1mm}
\noindent
\textbf{(1) Prediction}:
For each test triple $(h,r,t)\in \mathcal{T}{test}$, two queries are generated by masking either the head or tail entity, i.e., $(h,r,?)$ and $(?,r,t)$.
A KGC model $\theta(\cdot)$ then estimates the likelihood of each candidate entity $e'\in\mathcal{E}$ being the missing entity.
Candidate entities are ranked according to their scores, producing the rank $1\le r\le|\mathcal{E}|$ of the correct entity.
For all queries, a rank vector $\mathbf{r}=[r_1,r_2,\dots,r_n]$ is produced, where $n=|\mathcal{T}_{{test}}|*2$ ($\because$ two queries for each triple).

\vspace{0.5mm}
\noindent
\textbf{(2) Transformation}:
Given $\mathbf{r}$, 
a rank transformation function $f(r,\alpha):\mathbb{N}\rightarrow\mathbb{R}$ converts each rank into a score based on the desired level of predictive sharpness $\alpha$.
The function $f(\cdot)$ is defined to be \textit{anti-monotone}, ensuring that lower numeric rank values yield higher scores.
This step produces a score vector $\mathbf{c}=[c_1,c_2,\dots,c_n]$.

\vspace{0.5mm}
\noindent
\textbf{(3) Aggregation}:
Given $\mathbf{c}$, a rank aggregation function
$agg(\mathbf{c},\beta):\mathbb{R}^n\rightarrow\mathbb{R}$
computes the final evaluation score as a weighted average of the scores in $\mathbf{c}$.
The weights are determined based on the popularity of the corresponding triples, reflecting the desired level of popularity-bias robustness controlled by $\beta$.

\subsection{Rank Transformation}\label{sec:proposed-rt}
The rank transformer (RT) of {\method} converts the rank of each prediction into a score, 
considering the required level of predictive sharpness. 
Formally, given a rank $r\in\mathbf{r}$ and a \textit{predictive sharpness control factor} $\alpha$, the score is computed as:
\begin{equation}
    f(r,\alpha)={1 \over r^{\alpha}} \hspace{1mm} (r \in\mathbf{r})\label{eq:transform}.
\end{equation}
When $\alpha<0$, the function becomes \textit{monotonically increasing}, assigning larger scores to worse predictions (larger ranks), 
leading undesirable evaluation behavior (e.g., sensitive to worse predictions)~\cite{hoyt2022unified,tiwari2021revisiting}.
Thus, we focus on $\alpha>0$, ensuring that lower ranks receive higher scores.
Figure~\ref{fig:RT}(a) illustrates the transformed scores for different values of $\alpha$.
Larger $\alpha$ imposes heavier penalties on large ranks, emphasizing top-ranked predictions.
Conversely, smaller $\alpha$ reduces the relative penalty on large ranks, allowing moderately ranked predictions to contribute more to the final score.

We note that this formulation can generalize various rank transformation rules used in existing metrics.
For example, when $\alpha=1$, the transformation reduces to the reciprocal rank $f(r,1)={1\over r}$ used in MRR.
As $\alpha=-1$, it degenerates to the identity function $f(r,-1)=r$, corresponding to directly using the rank values as in MR.
% Thus, existing metrics can be interpreted as special cases with a \textit{fixed} level of predictive sharpness.

\begin{figure}[t]
\centering
\begin{tabular}{cc}
    \includegraphics[width=0.26\linewidth]{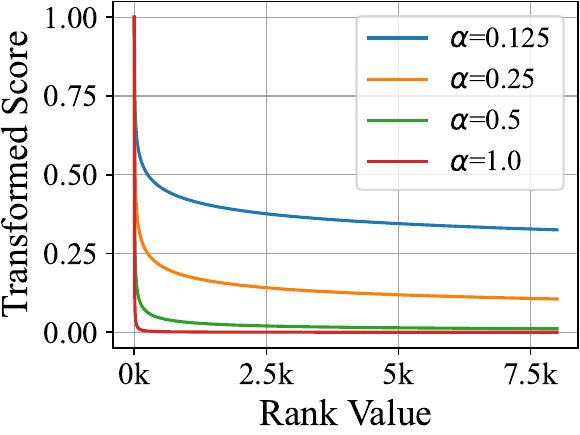} &
    \includegraphics[width=0.26\linewidth]{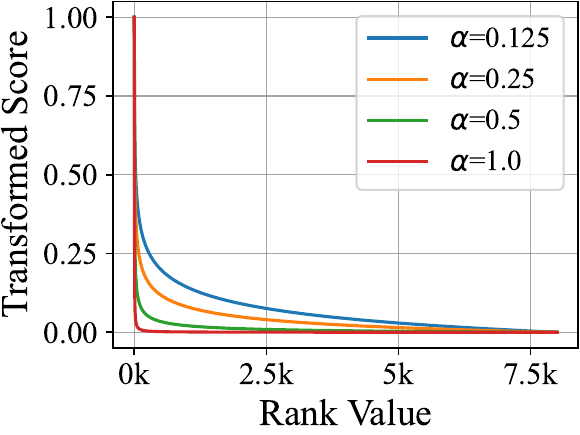} \\
    (a) Original RT  & (b) Affine RT \\
    \vspace{-7mm}
\end{tabular}
\caption{Rank transformers of {\method}: the affine RT rescales the range of scores to $[0,1]$.}
\vspace{-5mm}
\label{fig:RT}
\end{figure}

% \vspace{1mm}
\noindent
\textbf{\underline{Improving the distinguishability of RT}}.
As shown in Figure~\ref{fig:RT}(a), the lower bound of the RT increases as $\alpha$ approaches zero.
Consequently, the score range shrinks,
i.e., $|\E|^{-\alpha} \le f(r,\alpha) \le 1$, where $|\E|$ is the number of candidate entities for a test query \textit{i} in the filtered setting.
A narrower score range reduces the differences between scores corresponding to different ranks, 
limiting the ability of the metric to distinguish model performance.
This issue becomes more severe when $|\E|$ is small, as the pessimum (minimum) score increases further.
To address this limitation, we apply an \textit{affine} transformation to the RT to rescale the scores from $[|\E|^{-\alpha},1]$ to $[0,1]$ (see Figure~\ref{fig:RT}(b)), 
mapping the worst prediction to the pessimum score of $0$. % while preserving the relative ordering of ranks.
The rank transformer of {\method} is re-defined as:
\begin{equation}
    f^*(r_i,\alpha)=\frac{f(r_i,\alpha)-1}{1-(|\E|^{-\alpha})}+1\label{eq:transform-affine}.
\end{equation}
This formulation improves the distinguishability of {\method} by enlarging the effective score range while maintaining the consistency of rank order,
thereby enhancing its suitability as a rank-based evaluation metric for KGC models.

\subsection{Rank Aggregation}\label{sec:proposed-ra}
The rank aggregator (RA) of {\method} assigns a weight to each prediction based on the popularity of its corresponding triple.
A key question is \textit{``how to estimate the popularity of a target triple?"}
One possible way is to use the popularity of a target entity in a triple.
However, the popularity of a triple is often not fully captured by entity frequency alone.
% Even if an entity appears frequently across many triples, its association with a specific relation may still be sparse.
For example, \textit{Barack Obama} appears in a number of triples with relations such as \textit{bornIn}, \textit{presidentOf}, or \textit{educatedAt}, yet rarely with relations such as \textit{memberOfSportsTeam}.
Thus, the triple $(\text{Obama}, \textit{memberOfSportsTeam}, t)$ should be regarded as a low-popularity query despite the high popularity of the entity.
Similarly, relation frequency alone is insufficient to characterize triple popularity.
Some relations (e.g., \textit{locatedIn}, \textit{typeOf}, \textit{hasGender}) occur frequently overall, 
but may still be rare for specific entities.

To empirically verify our claim, we analyze the relationship between entity popularity and entity-conditioned relation popularity in real-world KGs.
Figure~\ref{fig:popularity} visualizes the entity and entity-conditioned relation popularity for triples sampled from three real-world KGs, where the bottom horizontal axis represents entity popularity $\delta(e)=d(e)/(|\mathcal{T}|*2)$, where $d(e)$ is the number of triples where entity $e$ appears and $|\mathcal{T}|$ is the total number of triples in a KG, 
and the top axis represents entity-conditioned relation popularity $\delta({r|e})=d(e\Rightarrow r)/d(e)$, where $d(e\Rightarrow r)$ is the number of triples where $e$ participate in the relation $r$.
Each triple is represented as a line connecting its entity popularity and entity-conditioned relation popularity values.
If entity popularity and relation popularity are strongly correlated, most lines would appear roughly parallel with few crossings.
However, we observe a large number of intersecting lines across all six datasets, indicating that entity popularity and relation popularity are not strongly correlated.

\begin{figure}[t]
\centering
\begin{tabular}{ccc}
    \includegraphics[width=0.3\linewidth]{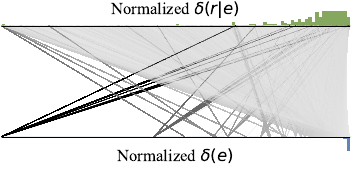} & \includegraphics[width=0.3\linewidth]{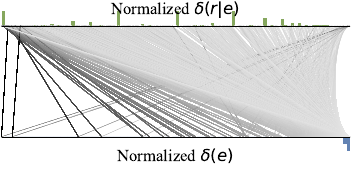} & \includegraphics[width=0.3\linewidth]{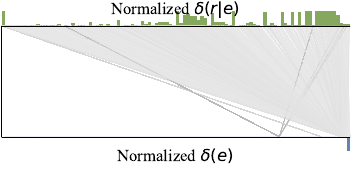} \\
    (a) FB15k237 & (b) WN18RR & (c) YAGO3-10 \\
    % \includegraphics[width=0.3\linewidth]{figures/family_e_r_corr.pdf} & \includegraphics[width=0.3\linewidth]{figures/umls_e_r_corr.pdf} & \includegraphics[width=0.3\linewidth]{figures/kinship_e_r_corr.pdf} \\
    % (d) Family & (e) UMLS & (f) Kinship \\
\end{tabular}
\vspace{-3mm}
\caption{Relationship between entity popularity $\delta(e)$ and entity-conditioned relation popularity $\delta(r|e)$ in three real-world KGs. The numerous intersecting lines indicate weak correlation between the two measures.}
\vspace{-5mm}
\label{fig:popularity}
\end{figure}

From this observation, we characterize popularity bias at the triple level by separately measuring (i) entity popularity and (ii) entity-conditioned relation popularity, and combining them to determine the final weight of a test query.
Formally, the weight of a test query $q=(h,r,?)$ (resp. $q=(?,r,t)$) is defined as:
\begin{align}
    w_t = w_e\cdot w_{r|e}, \quad \text{where} \quad w_e = {1 \over (\epsilon_e + \delta(e))^{\beta}}, \hspace{2mm} w_{r|e} = {1 \over (\epsilon_{r|e} + \delta(r|e))^{\beta}},\label{eq:weight}
\end{align}
where $e$ is the target entity,
$\delta(e)$ is the $e$'s popularity,
$\delta(r|e)$ is the popularity of the relation $r$ conditioned on $e$,
$\beta$ is a factor to control the level of popularity-bias robustness,
and $\epsilon$ is a small constant to prevent the division-by-zero problem.
Then, the weights for all test triples are normalized as follows: 
\begin{align}
\mathbf{\bar{w}} = { \mathbf{w} \over ||\mathbf{w}||_1 }, \quad \mathbf{w} = [w_1, w_2, \dots, w_n],
\end{align}
where $||\mathbf{w}_t||_1 = \sum_{i=1}^{n}w_i$ denotes the L1 norm.
This normalization ensures that {\method} computes a weighted average of the transformed scores, preventing the magnitude of the weights from affecting the final evaluation score.

Therefore, given a KGC model $\theta(\cdot)$ and a set $\mathcal{T}_{test}$ of test triples, {\method} computes the final evaluation score of the model by taking the weighted average of the transformed scores $\mathbf{c} \in \mathbb{R}^{n}$ produced by the RT, where each score is weighted by its corresponding popularity-aware weight $\mathbf{\bar{w}}\in \mathbb{R}^{n}$ determined by the RA:
\begin{align}
    {\method}(\theta, \mathcal{T}_{test})=agg(\mathbf{c},\mathbf{\bar{w}})={1 \over n}\sum^{n}_{i=1} w_i \cdot c_i.\label{eq:aggregation}
\end{align}

Figure~\ref{fig:RA}(a) shows the weight of an entity (resp. relation) according to their popularity across different levels of popularity-bias robustness $\beta$.
When $\beta>0$, RA assigns smaller weights to more popular entities (resp. relations) and relatively larger weights to less popular ones, mitigating the dominance of high-popularity triples in evaluation.
When $\beta=0$, all entities (resp. relations) receive the same weight of 1, reducing the evaluation to the simple average of transformed scores without considering popularity as in existing metrics such as MR, MRR, and Hits@K.
Figures~\ref{fig:RA}(b)-(d) show the final triple weights according to entity popularity $\delta(e)$ and entity-conditioned relation popularity $\delta(r|e)$.
Since $w_t$ is defined as the product $w_e \cdot w_{r|e}$, 
a triple is considered highly popular only when both popularity measures are high.
Thus, our weighting scheme captures popularity at the triple level.
% This formulation also generalizes weighting strategies used in existing metrics.
% For instance, when $\beta=0$, all predictions receive identical weights ($w_t=1$), corresponding to uniform averaging in MR and MRR.
% Therefore, existing metrics such as MR, MRR, and Strat-MRR can be interpreted as special cases with a \textit{fixed} level of popularity-bias robustness.

\begin{figure}[b]
\centering
\setlength\tabcolsep{0pt}
\begin{tabular}{cccccc}
    \includegraphics[width=0.22\linewidth]{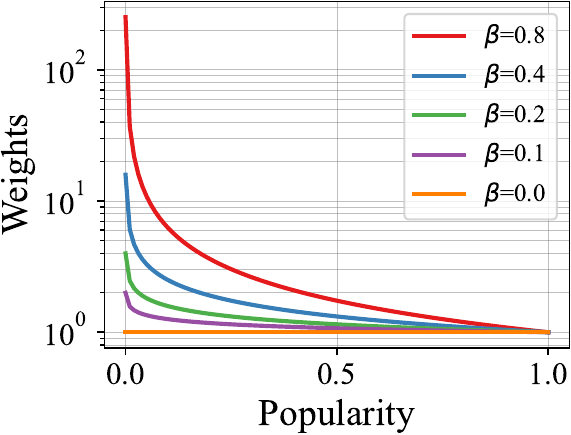} \qquad\qquad &
    \includegraphics[width=0.19\linewidth]{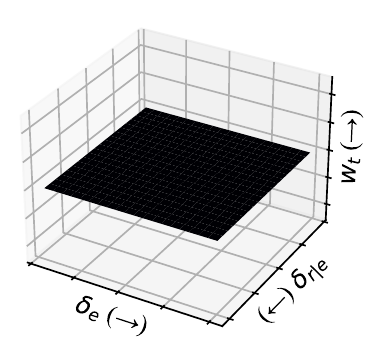} &
    \includegraphics[width=0.19\linewidth]{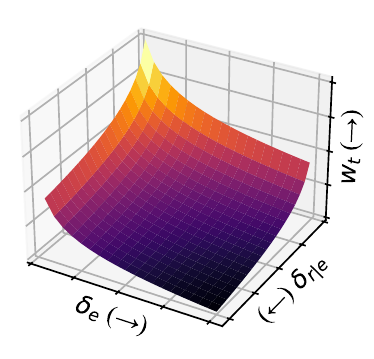} &
    \includegraphics[width=0.19\linewidth]{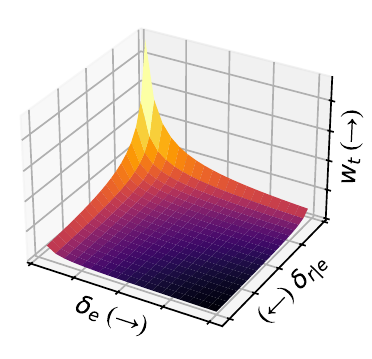} & 
    \includegraphics[width=0.03\linewidth]{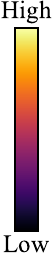}\\
    (a) Weight functions & (b) $\beta=0.0$ & (c) $\beta=0.1$ & (d) $\beta=0.8$ &  & \\
\end{tabular}
\vspace{-3mm}
\caption{(a) Weight functions with varying $\beta$ and (b) the final weights of triples depending on their entity and conditioned relation popularity with three different $\beta=0.0$, $\beta=0.1$, and $\beta=0.8$.}
\label{fig:RA}
\end{figure}

\subsection{Geometric Interpretation of {\method}}\label{sec:proposed-interpret}
In {\method}, the transformed scores and their corresponding popularity-aware weights can be viewed as vectors in an $n$-dimensional space, i.e., $\mathbf{c}=[c_1,c_2,\cdots,c_{n}]$ and $\mathbf{\bar{w}}=[\bar{w}_{1},\bar{w}_{2},\cdots,\bar{w}_{n}]$, where $n = 2*|\mathcal{T}_{test}|$ and each dimension corresponds to a test query.
Under this representation, the final evaluation score of a KGC model can be interpreted as the inner product between the two vectors:
\begin{align}
{\method}(\theta,\mathcal{T}_{test}) = agg(\mathbf{c},\mathbf{\bar{w}}) = \mathbf{c}\cdot \mathbf{\bar{w}}.
\label{eq:final-inner-product}
\end{align}

From this geometric viewpoint, 
the RT of {\method} maps the rank vector $\mathbf{r}$ into a transformed space whose geometry reflects the desired level of predictive sharpness.
Larger $\alpha$ amplifies score differences among top-ranked predictions, increasing the relative importance of correctly ranking entities near the best position (rank 1), 
resulting in sharper separation between top and lower ranks.
Conversely, smaller $\alpha$ compresses score differences across ranks, reducing the penalty for moderately ranked predictions.
Meanwhile, the weight vector $\mathbf{\bar{w}}$ defines an evaluation direction that reflects the desired level of popularity-bias robustness.
Each element $\bar{w}_i$ determines the contribution of the corresponding test query to the final score, indicating the relative importance of different regions in the evaluation space.
Larger $\beta$ tilts the evaluation direction toward low-popularity triples by assigning relatively larger weights to them.
Thus, $\alpha$ controls the geometry of the transformed score space, while $\beta$ controls the evaluation direction.

Under this interpretation, the final score measures the alignment between the score vector and the popularity-aware evaluation direction.
High evaluation performance can be achieved when model predictions align well with both evaluation perspectives: (P1) predictive sharpness and (P2) popularity-bias robustness.
Consequently, different KGC models may exhibit different degrees of alignment with this direction, leading to different evaluation outcomes.

\begin{figure}[t]
    \centering
    \includegraphics[width=1.0\linewidth]{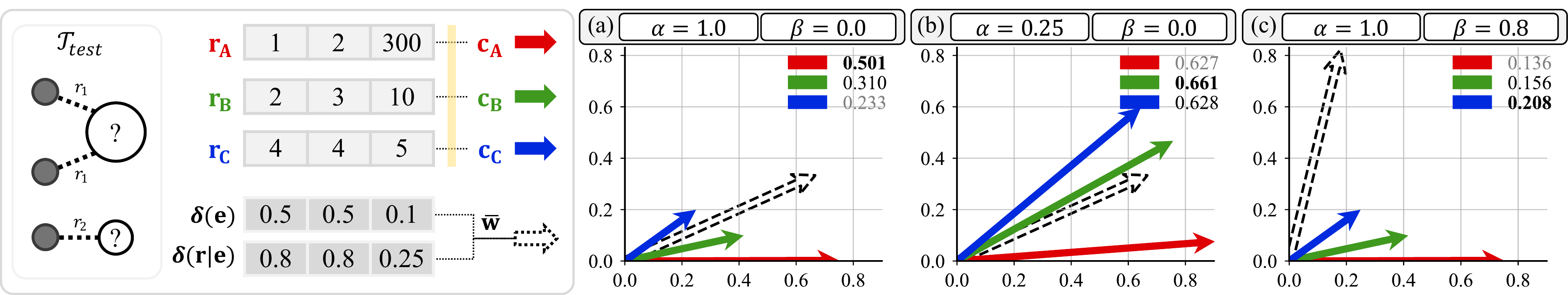}
    \caption{Geometric interpretation of {\method}: The parameter $\alpha$ controls the geometry of the transformed score space, while $\beta$ determines the evaluation direction through popularity-aware weighting. Therefore, the final evaluation score can be interpreted as the alignment between the score vector and the popularity-aware evaluation direction.}
    \label{fig:interpretability}
\end{figure}

\vspace{1mm}
\noindent
\textbf{Examples.} 
Consider three KGC models evaluated on three test triples.
They produce the rank vectors $\mathbf{r_A}=[1,2,300]$, $\mathbf{r_B}=[2,3,10]$, and $\mathbf{r_C}=[4,4,5]$.
Figure~\ref{fig:interpretability} shows that the final evaluation scores vary depending on the values of $\alpha$ and $\beta$, which can lead to different preferred models.
When $\alpha=1$ and $\beta=0$ (a), this setting imposes a strong penalty on incorrect predictions (i.e., high predictive sharpness) while not considering popularity bias. 
In this case, model A achieves the best performance.
When $\alpha=0.25$ and $\beta=0$ (b), the required level of predictive sharpness is reduced, allowing moderately ranked predictions to contribute positively to the overall evaluation score. 
As a result, a model that consistently predicts reasonable ranks, even if not top-ranked, may be preferred, and model B achieves the best performance.
When $\alpha=1$ and $\beta=0.8$ (c), the required level of popularity-bias robustness is increased, which assigns greater weights to the predictions on low-popularity triples.
In this case, a model that precisely predicts the ranks of low-popularity triples may be preferred, and model C achieves the best performance.

\section{Theoretical Analysis of {\method}}\label{sec:analysis}
In this section, we present six key properties for serving as a meaningful rank-based metric for KGC models and show that {\method} satisfies all the properties (Section~\ref{sec:analysis-property}). 
We further analyze how reliably our proposed metric can estimate the intrinsic predictive performance of KGC models under the open-world assumption (OWA) (Section~\ref{sec:analysis-openworld}).

\subsection{Key Properties of a Rank-based Metric}\label{sec:analysis-property}

\vspace{1mm}
\noindent
\textbf{\textsc{Property 1}} (\textsc{Fixed optimum}).
A rank-based metric satisfies the \textit{fixed optimum} property if its transformation function $f(\cdot)$ assigns a constant optimal score to the best possible prediction, i.e., when the target entity is ranked first by a KGC model: i.e., $f(r_i)=c_{opt}$, where $r_i=1$.

\vspace{0.5mm}
\textbf{Analysis under \textsc{Property 1}}.
{\method} satisfies this property since $f^*(1)=1=c_{opt}$ according to Eq.~(\ref{eq:transform-affine}).
MR, Hits@K, MRR, p-MRR, and Strat-MRR also satisfy this property, because each assigns its optimal value when $r_i=1$.
Specifically, MR achieves its minimum value, while Hits@K, MRR and the variants of MRR yield the maximum value of $1$.
Therefore, these metrics satisfy the fixed optimum property.

\vspace{1mm}
\noindent
\textbf{\textsc{Property 2}} (\textsc{Fixed pessimum}).
A rank-based metric satisfies the \textit{fixed pessimum} property if its transformation function $f(\cdot)$ assigns a constant pessimum score to the worst possible prediction, i.e., when the target entity is ranked last by a KGC model: i.e., $f(r_i)=c_{pes}$, where $r_i=|\E|$.

\vspace{0.5mm}
\textbf{Analysis under \textsc{Property 2}}.
{\method} satisfies this property since $f^*(|\E|)=0=c_{pes}$ according to Eq.~(\ref{eq:transform-affine}).
In contrast, MRR, p-MRR, and Strat-MRR do not guarantee a fixed pessimum score, because their transformation function $f(r_i)=1/r_i$ only approaches zero \textit{asymptotically} as $r_i \rightarrow \infty$.
Thus, these metrics do not provide a constant lower bound for the worst prediction.
In contrast, {\method} assigns a fixed pessimum value, improving interpretability and discrimination among poorly ranked predictions.

\vspace{1mm}
\noindent
\textbf{\textsc{Property 3}} (\textsc{Anti monotonicity}).
A rank-based metric is \textit{anti-monotone} if its transformation function $f(\cdot)$ assigns a higher score to a better rank, i.e., a lower rank value: i.e., $r_i > r_j \rightarrow f(r_i) < f(r_j)$.

\vspace{0.5mm}
\textbf{Analysis under \textsc{Property 3}}.
{\method} satisfies this property since the RT assigns higher scores to better ranks, i.e., for any $r_i > r_j$, $f(r_i,\alpha) < f(r_j,\alpha)$ (see Figure~\ref{fig:RT}).
MRR, p-MRR, and Strat-MRR also satisfy this property because their transformation function is monotonically decreasing with rank.
In contrast, MR does not satisfy this property since $f_{\mathrm{MR}}(r_i)=r_i$ assigns larger values to worse ranks.
Hits@K also violates this property because it assigns identical scores to different ranks within the same interval 
% (e.g., $f(r_i)=f(r_j)$ when both ranks are either $\le K$ or $>K$).
Although this property is not strictly required, it improves interpretability by ensuring that better predictions consistently receive higher scores within a bounded range (e.g., $[0,1]$).

\vspace{1mm}
\noindent
\textbf{\textsc{Property 4}} (\textsc{Candidate-size awareness}).
A rank-based metric is \textit{candidate-size aware} if its transformation function $f(\cdot)$ assigns different scores to the same rank depending on the size of the candidate entity set.
Formally, for two predictions with identical rank $r_i=r_j$ on two test triples with different candidate sizes $|\mathcal{E}_i|<|\mathcal{E}_j|$, it assigns different scores: i.e., $|\mathcal{E}_i| <|\mathcal{E}_j| \rightarrow f(r_i)<f(r_j)$.

\vspace{0.5mm}
\textbf{Analysis under \textsc{Property 4}}.
{\method} satisfies this property since the RT explicitly incorporates the candidate size $|\mathcal{E}_i|$ into the transformation function (Eq.~(\ref{eq:transform-affine})).
For the same rank $r$, predictions evaluated over larger candidate sets receive higher scores, reflecting the greater difficulty of achieving the same rank among more candidates.
Hits@K, MR, MRR, p-MRR, and Strat-MRR depend only on rank $r_i$ and ignore candidate size, assigning identical scores to predictions with the same rank.
In contrast, $\overline{WMR}$ and AMRI account for candidate size, enabling more meaningful comparison across predictions with different candidate spaces.
% This property captures the intuition that identical ranks should be evaluated differently depending on the size of the candidate set.

\begin{table*}[t]
\caption{Comparison of {\method} with existing metrics based on the six key properties.}
\vspace{-3mm}
\setlength\tabcolsep{5pt}
\begin{tabular}{lccccccc|c}
\toprule
Property  & MR & MRR & Hits@K & p-MRR & Strat-MRR & $\overline{WMR}$ & AMRI & \textbf{{\method}} \\

\midrule
\textsc{Fixed optimum} &  $\boldcheckmark$ & $\boldcheckmark$ & $\boldcheckmark$ & $\boldcheckmark$ & $\boldcheckmark$ & $\boldcheckmark$ & $\boldcheckmark$ & $\boldcheckmark$ \\

\textsc{Fixed pessimum} &  - & - & $\boldcheckmark$ & - & - & $\boldcheckmark$ & $\boldcheckmark$ & $\boldcheckmark$ \\

\textsc{Anti monotonicity} &  - & $\boldcheckmark$ & - & $\boldcheckmark$ & $\boldcheckmark$ & $\boldcheckmark$ & $\boldcheckmark$ & $\boldcheckmark$ \\

\textsc{Candidate-size awareness} & - & - & - & - & - & $\boldcheckmark$ & $\boldcheckmark$ & $\boldcheckmark$ \\

\textsc{Sharpness awareness} & - & - &$\boldcheckmark$& $\boldcheckmark$ & - & - & - & $\boldcheckmark$ \\

\textsc{Popularity awareness} &  - & - & - & - & - & - & - & $\boldcheckmark$ \\

\bottomrule
\end{tabular}
\vspace{-5mm}
\label{table:property}
\end{table*}

\vspace{1mm}
\noindent
\textbf{\textsc{Property 5}} (\textsc{Sharpness awareness}).
A rank-based metric is \textit{sharpness-aware} if its transformation function $f(\cdot)$ can adjust the sensitivity of evaluation to rank differences (i.e., varying levels of predictive sharpness).
Formally, for a fixed rank $r$ and two different sharpness levels $\alpha \neq \alpha'$, the metric assigns different scores: i.e., $f_{\alpha}(r) \neq f_{\alpha'}(r).$

\vspace{0.5mm}
\textbf{Analysis under \textsc{Property 5}}.
{\method} satisfies this property since the RT incorporates a controllable parameter $\alpha$ that adjusts the sensitivity of evaluation to rank differences.
Larger $\alpha$ imposes larger penalties on worse ranks, leading to sharper discrimination between high- and low-quality predictions.
Among existing metrics, p-MRR reflects predictive sharpness through its exponent parameter $p$.
However, p-MRR does not satisfy the fixed pessimum property (\textsc{Property 2}), as it cannot guarantee a constant lower bound for worst-ranked predictions.

% \vspace{0.5mm}
% \textbf{Analysis under \textsc{Property 5}}.
% {\method} satisfies this property as it incorporates an adjustable hyperparameter $\alpha$ to control how sharply incorrect predictions are penalized, i.e., larger $\alpha$ imposes steeper penalty for higher ranks (worse prediction), enabling sharper evaluation.
% p-MRR is the only existing metric that considers predictive sharpness.
% However, p-MRR does not satisfy the fixed pessimum property and cannot evaluate the popularity-bias robustness of a KGC model.
% Only three metrics can control the score mass using an additional parameter. 
% To increase predictive sharpness, Hits@K can increase K, p-MRR can decrease p.

\vspace{1mm}
\noindent
\textbf{\textsc{Property 6}} (\textsc{Popularity awareness}).
A rank-based metric is \textit{popularity-aware} if its aggregation function assigns different importance weights to prediction queries according to the popularity of their corresponding triples.
Formally, for a query triple $q \in \mathcal{T}_{test}$, the aggregation weight is determined by its popularity:
i.e., $w_q \propto \delta(q)$, where $\delta(q)$ denotes the popularity of query $q$.

\vspace{0.5mm}
\textbf{Analysis under \textsc{Property 6}}.
{\method} satisfies this property since its RA assigns weights based on the popularity of each query according to Eq.~(\ref{eq:weight}), assigning lower weights to high-popularity triples and higher weights to low-popularity triples.
This enables the evaluation metric to reflect the robustness of KGC models to popularity bias.
On the other hand, most existing metrics assign equal weights to all test triples and therefore ignore popularity differences across triples.
Although Strat-MRR introduces weighting across query groups, its weights depend on the grouping structure of the test set rather than the popularity structure of the KG, and thus cannot capture popularity-bias robustness.
In contrast, {\method} explicitly incorporates popularity information into the evaluation process, 
thereby enabling systematic assessment of model robustness to popularity bias.

As summarized in Table~\ref{table:property}, {\method} satisfies all six key properties, whereas existing metrics satisfy only a subset of them. 
This indicates that {\method} provides a more comprehensive and principled evaluation framework for KGC models, 
while remaining a generalized metric that can degenerate to existing metrics under specific parameter settings.

\subsection{Evaluation under the Open-World Assumption (OWA)}\label{sec:analysis-openworld}
Since real-world KGs are inherently incomplete~\cite{trouillon2016complex,sun2019rotate}, the absence of a triple does not necessarily imply that it is false.
Thus, an observed KG can be viewed as a \textit{partial observation} of an underlying complete KG containing all true facts, i.e., the closed-world assumption (CWA).
Therefore, an evaluation metric for KGC models should estimate the model’s intrinsic predictive performance of the complete KG, even when evaluated is performed on an incomplete KG, i.e., the open-world assumption (OWA).
In particular, if an incomplete KG is obtained by removing a subset of true facts from a complete KG, the intrinsic predictive performance of a model should remain unchanged.
Thus, the evaluation metric should provide consistent performance estimates across the two settings.
Formally, for a model $\theta$, a set of test triples $\mathcal{T}$, a desirable evaluation metric $M(\cdot)$ should satisfy:
% An evaluation metric should reflect the intrinsic predictive performance of a KGC model, independent of the world assumption adopted during evaluation.
% Ideally, the expected performance of a model should remain consistent whether the evaluation is conducted under the open-world assumption (OWA) or the closed-world assumption (CWA):
\begin{equation}
\mathbb{E}_{\mathcal{T} \sim \mathcal{D}_{OWA}}
\left[
M(\theta, \mathcal{T})
\right]
\approx
\mathbb{E}_{\mathcal{T} \sim \mathcal{D}_{CWA}}
\left[
M(\theta, \mathcal{T})
\right].
\end{equation}

However, accurate estimation of intrinsic predictive performance under the OWA remains challenging, 
because a number of true facts may be missing from the observed KG, and the extent of such missing facts is unknown.
While the filtered setting~\cite{bordes2013translating} improves evaluation reliability by removing known true entities during rank computation, it remains unclear how many additional true entities are still missing from the KG.
Consequently, the ranks computed by a KGC model may still underestimate the model’s performance since unobserved true entities may be ranked higher than the target entity but are incorrectly treated as negatives~\cite{lv2022crat1}.

Figure~\ref{fig:owa} illustrates this challenge with an example query (?, \textit{isLocatedIn}, Quebec), where the target answer is \textit{Rosemère}.
We train ComplEx~\cite{trouillon2016complex} on the YAGO3-10 dataset and make a prediction on the query.
\textbf{(1) \textit{Without} filtering}, the target entity is ranked 146th due to many higher-ranked true entities, resulting in a severely underestimated evaluation result.
\textbf{(2) \textit{With} filtering}, 137 known true entities, ranked higher than the target entity, are removed, thereby improving the rank to 9.
However, we observe that 7 of the remaining 8 higher-ranked entities (e.g., \textit{Yamachinche}, \textit{Maria-Chapdelanine}, and \textit{LaSalle}) are also true yet unknown.
After accounting for these unknown true entities (i.e., \textbf{(3) \textit{full} filtering}), 
the model ranks the target entity as 2nd among all true answers.
This example highlights that the intrinsic predictive performance of a KGC model should be evaluated based on the latent true rank (e.g., rank 2 in this example).

\begin{figure}[t]
    \centering
    \includegraphics[width=0.98\linewidth]{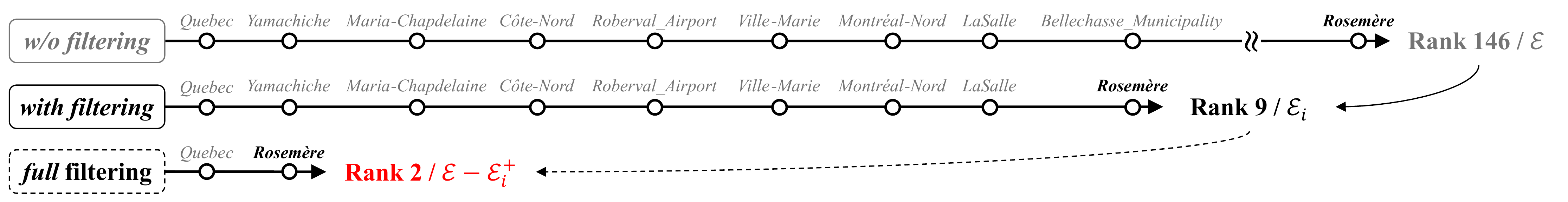}
    \vspace{-3mm}
    \caption{Example illustrating the challenge of evaluation under the OWA. 
    For the query (?, \textit{isLocatedIn}, Quebec), a model ranks the target entity \textit{Rosemère} as \textbf{(1)} 146th without filtering, \textbf{(2)} 9th after removing 137 known true entities with filtering, and \textbf{(3)} 2nd after additionally accounting for even unknown true entities (full filtering).}\label{fig:owa}
    \vspace{-5mm}
\end{figure}

Motivated by the aforementioned challenge of KGC evaluation under the OWA, 
\citet{yang2022rethinking} theoretically analyzed how reliably an evaluation metric reflects true model strength.
Specifically, they examined how the expected value of a metric changes with respect to the model’s true performance.
Formally, the derivative of the expected metric value with respect to true model strength $\ell$ is defined as:
\begin{equation}
\frac{\mathrm{d}\hat{\mathbb{E}}(M)}{\mathrm{d}\ell}
= \frac{1}{\ell \gamma (|\mathcal{E}_i^+|+1)}
\mathbb{E}_{r \sim \mathcal{B}(|\mathcal{E}_i^+|+1,\ell\gamma)}g(r),
\label{eq:cor4.1}
\end{equation}
where $M$ denotes the evaluation metric, $\ell$ represents the true model strength (i.e., model performance under CWA), $\gamma$ is the sparsity of the KG, $|\mathcal{E}_i^+|$ is the total number of true entities with respect to a given a test query \textit{i} in the complete KG, and $r$ is the predicted rank following a binomial distribution $\mathcal{B}(|\mathcal{E}_i^+|+1,\ell\gamma)$.
The function $g(r)=rf(r)$ depends on the rank transformation function $f(\cdot)$ defined by the metric.
Thus, Eq.~(\ref{eq:cor4.1}) provides a theoretical basis for analyzing how reliably an evaluation metric reflects intrinsic predictive performance under the OWA.

Now, we analyze how reliably {\method} reflects true model strength under the OWA.
Specifically, we analyze how the derivative of the expected {\method} score with respect to true model strength varies as a function of the sharpness control factor $\alpha$,
by examining the sign of the partial derivative of $\frac{\mathrm{d}\hat{\mathbb{E}}({\method})}{\mathrm{d}\ell}$ with respect to $\alpha$.
Since the denominator in Eq.~(\ref{eq:cor4.1}) and the binomial term $\mathcal{B}(|\mathcal{E}_i^+|+1,\ell\gamma)$ do not depend on $\alpha$,
the behavior of $\frac{\mathrm{d}\hat{\mathbb{E}}({\method})}{\mathrm{d}\ell}$ with respect to $\alpha$ is determined solely by $g(r)$.
Therefore, this analysis reduces to understanding the sign of $\frac{\partial}{\partial \alpha} g(r)$.

Then, we apply Eq.~(\ref{eq:cor4.1}) to the RT of {\method} and rewrite the derivative with respect to $\alpha$ as follows:
\begin{align}
\frac{\partial}{\partial\alpha}
\left(
\frac{\mathrm{d}\hat{\mathbb{E}}({\method})}{\mathrm{d}\ell}
\right)
&\propto \frac{\partial}{\partial\alpha} g(r) \notag
= \frac{\partial}{\partial\alpha} \big(r f_\alpha^*(r)\big) 
\propto
\frac{\partial}{\partial\alpha} f_\alpha^*(r) \quad (\because r \text{ is independent to } \alpha) \\
&= \frac{\partial}{\partial\alpha} (\frac{f(r,\alpha)-1}{1-(|\E|^{-\alpha})}+1) 
\quad (\because f_\alpha^*(r) =\frac{f(r,\alpha)-1}{1-(|\E|^{-\alpha})}+1) \\
&\propto (\frac{|\E|}{r})^\alpha ((r^\alpha-1)\ln |\E| + (1-|\E|^\alpha)\ln r).
\quad
\label{eq:probe-reduced}
\end{align}

Since $(\frac{|\E|}{r})^\alpha>0$,
the sign of
$\frac{\partial}{\partial\alpha}
\left(
\frac{\mathrm{d}\hat{\mathbb{E}}({\method})}{\mathrm{d}\ell}
\right)$
is determined by the sign of $F_\alpha(r)=(r^\alpha-1)\ln |\E| + (1-|\E|^\alpha)\ln r$.
By showing that $F_\alpha(r)$ does not change the sign, i.e., $\forall r$, $F_\alpha(r)\leq0$ or $F_\alpha(r)\geq0$,
we can ensure that the expected {\method} score changes \textit{monotonically} with respect to the true model strength $\ell$.
In other words, this indicates that \textit{\textbf{$\alpha$ directly controls how reliabily {\method} reflects intrinsic model performance under the OWA}}. 

From this observation, we present the following lemma:
\begin{lemma}
Let $\ell$ and $\alpha$ denote the true model strength under the CWA and the sharpness parameter, respectively.
Then, the derivative of the expected {\method} score with respect to $\ell$ is a monotonically decreasing function of $\alpha\in\mathbb{R}$ ($\alpha\neq0$):
\[
\frac{\partial}{\partial\alpha}
\left(
\frac{\mathrm{d}\hat{\mathbb{E}}({\method})}{\mathrm{d}\ell}
\right)
\le 0.
\]
\label{lem:probe}
\end{lemma}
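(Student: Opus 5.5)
The plan is to take the reduction in Eq.~(\ref{eq:probe-reduced}) as given and prove the one remaining inequality: $F_\alpha(r)=(r^\alpha-1)\ln|\E|+(1-|\E|^\alpha)\ln r\le 0$ for every admissible rank $1\le r\le|\E|$ and every $\alpha\neq0$. Once this holds, the expectation in Eq.~(\ref{eq:cor4.1}) is taken against a nonnegative binomial weight, and the prefactor $\frac{1}{\ell\gamma(|\mathcal{E}_i^+|+1)}$ is positive and free of $\alpha$. So the sign carries through the expectation and yields the stated inequality.

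\textbf{Step 1: check the discarded factors.} Before using the reduction, I will re-derive $\partial_\alpha f^*_\alpha(r)$ from Eq.~(\ref{eq:transform-affine}) and confirm that every factor dropped in the $\propto$ steps is strictly positive. The factors to check are $r>0$ and $(|\E|/r)^\alpha>0$. The quotient rule also produces a denominator of the form $(1-|\E|^{-\alpha})^2$ up to positive powers of $|\E|$ and $r$. This factor is a square, so it stays positive even when $\alpha<0$, where $1-|\E|^{-\alpha}$ itself is negative. This step is the likeliest place for a sign slip, so I will carry it out explicitly for both signs of $\alpha$. If a negative constant appeared, the conclusion would reverse, so this check matters.

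\textbf{Step 2: change variables.} Set $L=\ln|\E|>0$ and $x=\ln r\in[0,L]$. Then
\[
F_\alpha(r)=L\,(e^{\alpha x}-1)-x\,(e^{\alpha L}-1).
\]
At the endpoints $x=0$ and $x=L$ this equals $0$. For $0<x<L$, I factor it as
\[
F_\alpha(r)=xL\,\bigl(h(x)-h(L)\bigr),\qquad h(x)=\frac{e^{\alpha x}-1}{x}.
\]
Since $xL>0$ on this range, it suffices to show $h(x)\le h(L)$.

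\textbf{Step 3: the key monotonicity.} The function $h(x)$ is the slope of the secant of $\phi(x)=e^{\alpha x}$ from $0$ to $x$. For any $\alpha\neq0$, $\phi$ is strictly convex because $\phi''=\alpha^2e^{\alpha x}>0$. Secant slopes from a fixed point of a convex function are nondecreasing in the other endpoint, so $h$ is increasing on $(0,\infty)$. If I want a direct calculation instead, the numerator of $h'$ is $\alpha x e^{\alpha x}-e^{\alpha x}+1$, which is $0$ at $x=0$ and has derivative $\alpha^2xe^{\alpha x}\ge0$, so it is nonnegative. Either way $h(x)\le h(L)$, hence $F_\alpha(r)\le0$ for all $r\in[1,|\E|]$.

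Combining this with Step 1 and the reduction Eq.~(\ref{eq:probe-reduced}) gives $\partial_\alpha g(r)\le0$ for every $r$. Integrating against the binomial weight in Eq.~(\ref{eq:cor4.1}) preserves the sign, which gives the lemma, with equality only in the degenerate cases $r=1$ and $r=|\E|$. The main obstacle is Step 1, not Step 3: the inequality itself is elementary once convexity is used, but the lemma's direction depends entirely on the omitted constants being positive.
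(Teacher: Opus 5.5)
Your proof of the key inequality $F_\alpha(r)\le 0$ is correct, but it runs along a different axis from the paper's.

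\textbf{Comparison of the two routes.} The paper fixes $r$ and varies $\alpha$. It computes $\partial_\alpha F_\alpha(r)=(r^\alpha-|\E|^\alpha)\ln r\,\ln|\E|$ and notes that, whenever $1\le r\le|\E|$, this is $\le 0$ for $\alpha>0$ and $\ge 0$ for $\alpha<0$. It then concludes that $\alpha\mapsto F_\alpha(r)$ peaks at $\alpha\to 0$, where $F_0(r)=0$; the paper uses this last fact without stating it. You instead fix $\alpha$ and vary $r$. In $x=\ln r$, $F$ is strictly convex, and your secant-slope lemma is equivalent to $\partial_x^2 F=\alpha^2\ln|\E|\,e^{\alpha x}>0$. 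Since $F$ vanishes at $x=0$ and $x=\ln|\E|$, it is nonpositive in between. The paper's route is a one-line sign reading of a factored derivative. Yours handles both signs of $\alpha$ uniformly, without the limit $\alpha\to 0$. Your Step~1 also makes explicit the positive factor that the paper's $\propto$ chain absorbs silently. The derivative is $\partial_\alpha f^*_\alpha(r)=(|\E|/r)^\alpha F_\alpha(r)/(|\E|^\alpha-1)^2$, so there is no sign slip for either sign of $\alpha$.

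\textbf{Missing step: the domain restriction.} The paper proves this step; you only assert it. In Eq.~(\ref{eq:cor4.1}) the rank is a binomial variable whose support extends up to $|\mathcal{E}_i^+|+1$. That bound is set by the number of true answers, not by the normalizer $|\E|$ in $f^*$. Calling $1\le r\le|\E|$ ``admissible'' does not by itself place the support inside $[1,|\E|]$. This matters: by your own convexity picture, $F_\alpha(r)>0$ for $r>|\E|$. Any binomial mass there would break your claim that integrating against the binomial weight preserves the sign.

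The paper closes this gap in two moves. First, $|\E|=|\mathcal{E}|-(|\mathcal{E}_i^+|-\mathbf{m}-1)\ge|\mathcal{E}|-|\mathcal{E}_i^+|+1$. Second, the sparsity observation $|\mathcal{E}_i^+|/|\mathcal{E}|<10\%$ then gives $r\le|\mathcal{E}_i^+|+1<|\E|$. You should add this step; any bound $2|\mathcal{E}_i^+|\le|\mathcal{E}|$ suffices, since $F_\alpha(|\E|)=0$ makes a non-strict inequality enough.
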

\textsc{Proof for Lemma 1}.
This proof is equivalent to showing that $F_\alpha(r)\le 0$ for all $r<|\E|$.
The derivative of $F_\alpha(r)$ with respect to $\alpha$ is defined as:
\begin{align}
\frac{\partial}{\partial \alpha} F_\alpha(r) 
&= \frac{\partial}{\partial \alpha} ((r^\alpha-1)\ln |\E| + (1-|\E|^\alpha)\ln r) \notag\\
&= (r^\alpha-|\E|^\alpha)\ln r \ln |\E|. \label{eq:derivative-F}
\end{align}
We first identify the feasible domain of the rank $r$.
Let $\mathbf{m}\sim\mathcal{B}(|\mathcal{E}_i^+|,\gamma)$ denote the number of missing true entities among all the true entities $|\mathcal{E}_i^+|$.
Since the number of filtered true entities excluding the target entity is $|\mathcal{E}_i^+|-\mathbf{m}-1$,
and the number of remaining candidate entities is $|\E|=|\mathcal{E}|-(|\mathcal{E}_i^+|-\mathbf{m}-1)$.
Thus, we have $|\mathcal{E}|-|\mathcal{E}_i^+|+1\le |\E|$ ($\because \mathbf{m}\ge 0$).
Then, from the observation in~\cite{yang2022rethinking} that for any query \textit{i}, the total number of true entities is typically less than 10\% of the total candidate entities, i.e., $|\mathcal{E}_i^+|/|\mathcal{E}|<10\%$, 
we obtain the following inequality:
\begin{equation}
1\le r\le |\mathcal{E}_i^+|+1 < 9|\mathcal{E}_i^+|+1 < |\mathcal{E}|-|\mathcal{E}_i^+|+1 \le |\E|\le |\mathcal{E}| \\
.\label{eq:derived_ineq}
\end{equation}
Based on this inequality, by applying $r<|\E|$ and $1\leq |\E|$ to Eq.~\ref{eq:derivative-F}, we obtain the following sign of the derivative:
\begin{align}
\frac{\partial}{\partial \alpha} F_\alpha(r)
\begin{cases}
\le 0 & \text{if } \alpha>0 \\
=0 & \text{if } \alpha\to 0 \\
\ge 0 & \text{if } \alpha<0
\end{cases}.
\end{align}

This shows that $F_\alpha(r)$ remains non-positive over the domain of interest, i.e., $\forall r$, $F_\alpha(r)\le 0$.
Recall that since $\left(\frac{|\E|}{r}\right)^\alpha>0$, 
the sign of 
$\frac{\partial}{\partial\alpha}\left(\frac{\mathrm{d}\hat{\mathbb{E}}({\method})}{\mathrm{d}\ell}\right)$
is determined solely by $F_\alpha(r)$.
Therefore, we conclude that
\begin{align}
\frac{\partial}{\partial\alpha}
\left(
\frac{\mathrm{d}\hat{\mathbb{E}}({\method})}{\mathrm{d}\ell}
\right)
\le 0.
\end{align}
\qed

\section{Experimental Validation}\label{sec:eval} 
In this section, we aim to answer the following research questions: 
% \begin{itemize}[leftmargin=8pt]
\begin{itemize}
    \item \textbf{RQ1}. To what extent does (P1) the predictive sharpness affect the evaluation score of KGC models?
    \item \textbf{RQ2}. To what extent  does (P2) the popularity-bias robustness affect the evaluation score of KGC models?
    \item \textbf{RQ3}. Does {\method} provide a more comprehensive evaluation of KGC models, compared to existing metrics?
    \item \textbf{RQ4}. How reliable are the evaluation results produced by {\method} under the open-world assumption (OWA)?
\end{itemize}

\subsection{Experimental Setup}\label{sec:eval-setup}
\noindent
\textbf{KG datasets and KGC models.}
In our experiments, we use six real-world knowledge graphs (KGs), which were also used in~\cite{sun2019rotate,qu2020rnnlogic,trouillon2016complex,balazevic2019tucker,li2022house,dettmers2018convolutional,yang2015embeddingentitiesrelationslearning}:
FB15k237~\cite{toutanova2015observed}, WN18RR~\cite{miller1995wordnet},
YAGO3-10~\cite{mahdisoltani2013yago3}, Family~\cite{yang2017differentiable}, UMLS~\cite{nickel2011rescal}, and Kinship~\cite{nickel2011rescal}. 
Table~\ref{table:datasets} shows the data statistics.
We use six state-of-the-art KGC models:
4 embedding-based models (RotatE~\cite{sun2019rotate}, ComplEx~\cite{trouillon2016complex}, HousE~\cite{li2022house}, and TuckEr~\cite{balazevic2019tucker}) and 2 rule-based models (pLogicNet~\cite{qu2019probabilistic} and RNNLogic~\cite{qu2020rnnlogic}).
For all KGC models, we use the official source codes provided by the authors.
Also, we release the code and datasets used in this paper at {\codeurl}.

\begin{table}[h]
\centering
\caption{Statistics of six real-world KGs used in our experiments.}
\label{table:datasets}
\setlength\tabcolsep{10pt}
\def\arraystretch{1.1} % row space
\begin{tabular}{l|rrrrr}
\toprule

 & $|\mathcal{E}|$ & $|\mathcal{R}|$ & $|\mathcal{T}|$ & $\delta(q)_{avg}$ & $\delta(q)_{max}$ \\ 
 \midrule

 \textbf{FB15k237} & 14,541 & 237 & 272,115 & 37.4 & 7,614 \\
 \textbf{WN18RR} & 40,943 & 11 & 86,835 & 4.2 & 482 \\
  \textbf{YAGO3-10} & 123,182 & 37 & 1,079,040 & 17.5 & 61,044 \\
 \textbf{Family} & 3,007 & 12 & 23,483 & 15.6 & 94 \\
  \textbf{UMLS} & 135 & 46 & 1,959 & 29.0 & 140 \\
 \textbf{Kinship} & 104 & 25 & 3,206 & 61.7 & 80 \\
% Dataset & \textbf{FB15k237}~\cite{bollacker2008freebase} & \textbf{WN18RR}~\cite{miller1995wordnet} \\

% \midrule
% \textit{\# of entities}  & 14,541 & 40,943 2 \\
% \textit{\# of relations} & 237 & 11  \\
% \textit{\# of triples}   & 272,115 & 86,835  \\
% \textit{Avg. degree}   & 37.5 & 4.3  \\
% \textit{Max. degree}   & 7,614 & 482  \\

\bottomrule
\end{tabular}
\end{table}

% \vspace{1mm}
\noindent
\textbf{Evaluation protocol.}
We use the evaluation protocol exactly same as that used in~\cite{sun2020reevaluation,qu2020rnnlogic,zhang2022redgnn}. 
For each query, we remove its corresponding known true entities from candidate entities during ranking (i.e., the filtered setting)
and assign the average rank of the tied group as the final rank (i.e., the tie-breaking protocol) for fair evaluation.
% During the training, the accuracy of a KGC model is measured on the validation set at every epoch and the best-performing model is saved.
% Then, we evaluate the best-performing model by using {\method} with varying levels of predictive sharpness and popularity-bias robustness.
During training, a KGC model is optimized to minimize its specific loss function (e.g., margin-based ranking loss, logistic loss, or cross-entropy loss). 
At each epoch, the model is evaluated on the validation set using a target evaluation metric (e.g., MRR, p-MRR, or {\method}), and the best-performing model is saved. 
Then, the selected model is evaluated on the test set using the same evaluation metric.
We report the averaged performance over three runs with different random seeds.
% {\method} under varying levels of predictive sharpness and popularity-bias robustness.

\begin{table}[t]
\centering
\footnotesize
\caption{The accuracy of KGC Models evaluated by {\method} with varying levels of predictive sharpness $\alpha$. The \textcolor[HTML]{F8A602}{gold}, \textcolor[HTML]{A5A5B1}{silver}, and \textcolor[HTML]{D36A49}{bronze} indicate the best, the second, and the third best results, respectively.}
\vspace{-3mm}
% The top three performers in each category are highlighted with a color gradient.} 
\setlength{\tabcolsep}{3.0pt}
\begin{tabular}{l ccccc | ccccc | ccccc}
\toprule
\toprule
\multirow{3}{*}{Models} 
    & \multicolumn{5}{c}{FB15k-237} 
    & \multicolumn{5}{c}{WN18RR}
    & \multicolumn{5}{c}{YAGO3-10}\\
\cmidrule(lr){2-16} 
    & $\alpha$=1.0 & $\alpha$=0.5 & $\alpha$=0.0 & $\alpha$=-0.5 & $\alpha$=-1.0   
    & $\alpha$=1.0 & $\alpha$=0.5 & $\alpha$=0.0 & $\alpha$=-0.5 & $\alpha$=-1.0    
    & $\alpha$=1.0 & $\alpha$=0.5 & $\alpha$=0.0 & $\alpha$=-0.5 & $\alpha$=-1.0   \\
\midrule

RotatE & \cellcolor{rank5}{0.3140} & \cellcolor{rank4}{0.4241} & \cellcolor{rank4}{0.7124} & \cellcolor{rank4}{0.9380} & \cellcolor{rank4}{0.9844}
 & \cellcolor{rank4}{0.4641} & \cellcolor{rank3}{0.5213} & \cellcolor{rank3}{0.6952} & \cellcolor{rank3}{0.8549} & \cellcolor{rank3}{0.9017}
& \cellcolor{rank2}{0.4743} & \cellcolor{rank2}{0.5640} & \cellcolor{rank3}{0.7971} & \cellcolor{rank3}{0.9555} & \cellcolor{rank3}{0.9813}
 
        \\
ComplEx & \cellcolor{rank4}{0.3146} & \cellcolor{rank3}{0.4283} & \cellcolor{rank2}{0.7213} & \cellcolor{rank2}{0.9425} & \cellcolor{rank2}{0.9855}
 & \cellcolor{rank3}{0.4754} & \cellcolor{rank4}{0.5179} & \cellcolor{rank4}{0.6594} & \cellcolor{rank4}{0.8121} & \cellcolor{rank4}{0.8700}
 & \cellcolor{rank5}{0.4496} & \cellcolor{rank3}{0.5467} & \cellcolor{rank2}{0.7979} & \cellcolor{rank1}{0.9588} & \cellcolor{rank2}{0.9815}

        \\

HousE & \cellcolor{rank6}{0.3073} & \cellcolor{rank6}{0.4100} & \cellcolor{rank6}{0.6910} & \cellcolor{rank5}{0.9244} & \cellcolor{rank5}{0.9778}
 & \cellcolor{rank6}{0.4184} & \cellcolor{rank6}{0.4913} & \cellcolor{rank2}{0.6991} & \cellcolor{rank2}{0.8823} & \cellcolor{rank1}{0.9346}
 & \cellcolor{rank3}{0.4538} & \cellcolor{rank5}{0.5297} & \cellcolor{rank5}{0.7500} & \cellcolor{rank5}{0.9267} & \cellcolor{rank5}{0.9647}
 
        \\

TuckER & \cellcolor{rank1}{0.3530} & \cellcolor{rank1}{0.4608} & \cellcolor{rank1}{0.7379} & \cellcolor{rank1}{0.9479} & \cellcolor{rank1}{0.9884}
 & \cellcolor{rank5}{0.4625} & \cellcolor{rank5}{0.4999} & \cellcolor{rank6}{0.6368} & \cellcolor{rank5}{0.7961} & \cellcolor{rank5}{0.8583}
 & \cellcolor{rank1}{0.5562} & \cellcolor{rank1}{0.6246} & \cellcolor{rank1}{0.8144} & \cellcolor{rank4}{0.9550} & \cellcolor{rank4}{0.9806}
 
        \\

pLogicNet & \cellcolor{rank2}{0.3241} & \cellcolor{rank2}{0.4311} & \cellcolor{rank3}{0.7149} & \cellcolor{rank3}{0.9386} & \cellcolor{rank3}{0.9847}
 & \cellcolor{rank1}{0.4957} & \cellcolor{rank1}{0.5672} & \cellcolor{rank1}{0.7593} & \cellcolor{rank1}{0.9002} & \cellcolor{rank2}{0.9292}
 & \cellcolor{rank4}{0.4535} & \cellcolor{rank4}{0.5467} & \cellcolor{rank4}{0.7927} & \cellcolor{rank2}{0.9576} & \cellcolor{rank1}{0.9834}
 
        \\

RNNLogic & \cellcolor{rank3}{0.3197} & \cellcolor{rank5}{0.4218} & \cellcolor{rank5}{0.6969} & \cellcolor{rank6}{0.9213} & \cellcolor{rank6}{0.9723}
 & \cellcolor{rank2}{0.4807} & \cellcolor{rank2}{0.5225} & \cellcolor{rank5}{0.6567} & \cellcolor{rank6}{0.7929} & \cellcolor{rank6}{0.8401}

& OOM & OOM & OOM & OOM & OOM 
        \\
% \midrule
% $max(\Delta)$ & 0.0443 & 0.0508 & 0.0469 & 0.0266 & 0.0161
% & 0.0773 & 0.0759 & 0.1225 & 0.1073 & 0.0945
% & 0.1158 & 0.1022 & 0.0676 & 0.0321 & 0.0187 \\

\bottomrule
\end{tabular}
\begin{tabular}{l ccccc | ccccc | ccccc}
\multirow{3}{*}{Models} 
    & \multicolumn{5}{c}{Family} 
    & \multicolumn{5}{c}{UMLS}
    & \multicolumn{5}{c}{Kinship}\\
\cmidrule(lr){2-16} 
    & $\alpha$=1.0 & $\alpha$=0.5 & $\alpha$=0.0 & $\alpha$=-0.5 & $\alpha$=-1.0   
    & $\alpha$=1.0 & $\alpha$=0.5 & $\alpha$=0.0 & $\alpha$=-0.5 & $\alpha$=-1.0    
    & $\alpha$=1.0 & $\alpha$=0.5 & $\alpha$=0.0 & $\alpha$=-0.5 & $\alpha$=-1.0   \\
\midrule

RotatE  & \cellcolor{rank4}{0.9487} & \cellcolor{rank4}{0.9645} & \cellcolor{rank5}{0.9834} & \cellcolor{rank4}{0.9922} & \cellcolor{rank4}{0.9939}
 & \cellcolor{rank4}{0.7784} & \cellcolor{rank4}{0.8368} & \cellcolor{rank4}{0.9054} & \cellcolor{rank3}{0.9552} & \cellcolor{rank3}{0.9771}
 & \cellcolor{rank4}{0.6336} & \cellcolor{rank4}{0.7236} & \cellcolor{rank4}{0.8372} & \cellcolor{rank4}{0.9276} & \cellcolor{rank3}{0.9719}
    
        \\

ComplEx  & \cellcolor{rank5}{0.9434} & \cellcolor{rank5}{0.9619} & \cellcolor{rank4}{0.9837} & \cellcolor{rank1}{0.9942} & \cellcolor{rank1}{0.9967}
 & \cellcolor{rank5}{0.6046} & \cellcolor{rank5}{0.6842} & \cellcolor{rank5}{0.7936} & \cellcolor{rank5}{0.8879} & \cellcolor{rank5}{0.9386}
 & \cellcolor{rank5}{0.5394} & \cellcolor{rank5}{0.6462} & \cellcolor{rank5}{0.7857} & \cellcolor{rank5}{0.9008} & \cellcolor{rank5}{0.9594}

        \\

HousE  & \cellcolor{rank2}{0.9757} & \cellcolor{rank2}{0.9818} & \cellcolor{rank2}{0.9893} & \cellcolor{rank3}{0.9938} & \cellcolor{rank3}{0.9953}
 & \cellcolor{rank2}{0.8354} & \cellcolor{rank2}{0.8775} & \cellcolor{rank2}{0.9271} & \cellcolor{rank2}{0.9637} & \cellcolor{rank2}{0.9802}
 & \cellcolor{rank2}{0.6808} & \cellcolor{rank2}{0.7620} & \cellcolor{rank2}{0.8621} & \cellcolor{rank2}{0.9400} & \cellcolor{rank1}{0.9770}

        \\

TuckER  & \cellcolor{rank1}{0.9784} & \cellcolor{rank1}{0.9832} & \cellcolor{rank1}{0.9895} & \cellcolor{rank2}{0.9939} & \cellcolor{rank2}{0.9959}
 & \cellcolor{rank1}{0.8439} & \cellcolor{rank1}{0.8849} & \cellcolor{rank1}{0.9323} & \cellcolor{rank1}{0.9669} & \cellcolor{rank1}{0.9825}
 & \cellcolor{rank1}{0.7067} & \cellcolor{rank1}{0.7808} & \cellcolor{rank1}{0.8716} & \cellcolor{rank1}{0.9421} & \cellcolor{rank2}{0.9759}

        \\

pLogicNet & \cellcolor{rank6}{0.8119} & \cellcolor{rank6}{0.8600} & \cellcolor{rank6}{0.9355} & \cellcolor{rank6}{0.9837} & \cellcolor{rank5}{0.9939}
 & \cellcolor{rank6}{0.5781} & \cellcolor{rank6}{0.6608} & \cellcolor{rank6}{0.7748} & \cellcolor{rank6}{0.8764} & \cellcolor{rank6}{0.9346}
 & \cellcolor{rank6}{0.3771} & \cellcolor{rank6}{0.4501} & \cellcolor{rank6}{0.5628} & \cellcolor{rank6}{0.6803} & \cellcolor{rank6}{0.7641}

\\

RNNLogic  & \cellcolor{rank3}{0.9737} & \cellcolor{rank3}{0.9781} & \cellcolor{rank3}{0.9839} & \cellcolor{rank5}{0.9882} & \cellcolor{rank6}{0.9901}
 & \cellcolor{rank3}{0.7942} & \cellcolor{rank3}{0.8455} & \cellcolor{rank3}{0.9068} & \cellcolor{rank4}{0.9526} & \cellcolor{rank4}{0.9739}
 & \cellcolor{rank3}{0.6417} & \cellcolor{rank3}{0.7289} & \cellcolor{rank3}{0.8394} & \cellcolor{rank3}{0.9277} & \cellcolor{rank4}{0.9712}

        \\
        % \midrule
% $\Delta$ & - & - & - & - & 
% & - & - & - & - & 
% & - & - & - & - &  \\
% \bottomrule

\bottomrule
\end{tabular}
\label{table:rq-1}
\end{table}

\subsection{RQ1. Effects of predictive sharpness}\label{sec:eval-rq1}
In this experiment, we measure the accuracy of six KGC models with the varying sharpness control factor $\alpha$, while fixing the level of popularity-bias robustness ($\beta=0$).
As shown in \ref{table:rq-1}, the results clearly demonstrate that the accuracy of KGC models are highly sensitive to the level of predictive sharpness controlled by $\alpha$.
As $\alpha$ varies, not only the absolute evaluation scores but also the relative rankings of models substantially change across datasets, indicating that different levels of predictive sharpness may favor different KGC models.
This observation suggests that selecting an inappropriate level of predictive sharpness for a target application (e.g, drug discovery or recommendation) may lead to the selection of a suboptimal model.
For instance, on FB15k-237, ComplEx is ranked only 4th when $\alpha=1.0$ (i.e., a high-sharpness setting similar to MRR), 
but its ranking progressively improves to 3rd at $\alpha=0.5$ and further to 2nd when $\alpha=0.0$.
Similarly, on YAGO3-10, ComplEx eventually becomes the best-performing model when $\alpha=-0.5$, despite being ranked only 5th under $\alpha=1.0$.

\begin{figure}[t]
    \centering
    \begin{tabular}{cc}
        \includegraphics[width=0.43\linewidth]{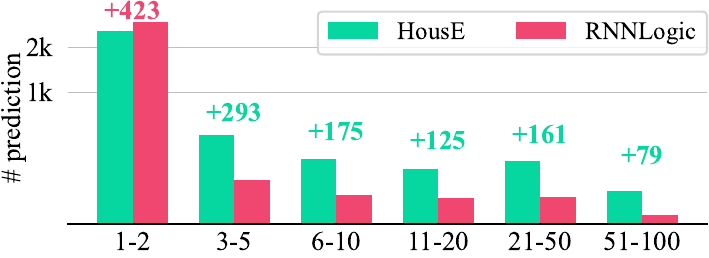} & \includegraphics[width=0.43\linewidth]{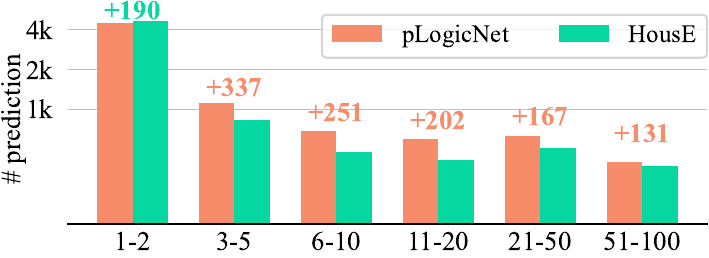} \\
    \end{tabular}
    \vspace{-4mm}
    \caption{Breakdown of predictions of two KGC models according to their ranks on the WN18RR and YAGO3-10 datasets.}
    \label{fig:rq1-case}
    \vspace{-3mm}
\end{figure}

As $\alpha$ decreases, {\method} imposes weaker penalties on non-top-ranked predictions, allowing moderately accurate predictions to contribute more positively to the final score.
Thus, models that consistently produce relatively high ranks tend to improve their relative rankings under low predictive sharpness settings.
In contrast, models that rely on a relatively small number of highly accurate top-ranked predictions tend to lose their advantage as $\alpha$ decreases.
For example, on WN18RR, HousE improves from 6th at $\alpha=1.0$ to 1st at $\alpha=-1.0$, whereas RNNLogic shows the opposite trend.
This suggests that HousE produces more stable rank predictions across broader rank ranges, while RNNLogic relies more heavily on top-ranked predictions.
A similar pattern is observed on YAGO3-10, where pLogicNet becomes increasingly competitive under lower predictive sharpness settings despite weaker performance under high predictive sharpness settings.
Specifically, as shown in Figure~\ref{fig:rq1-case}, although RNNLogic achieves slightly more rank-1 and rank-2 predictions, HousE substantially outperforms RNNLogic across ranks 3--100.
Consequently, RNNLogic is favored under high predictive sharpness settings, while HousE becomes increasingly competitive as $\alpha$ decreases.
A similar pattern is also observed between pLogicNet and HousE on YAGO3-10.
Therefore, different applications may require different levels of predictive sharpness, and failing to reflect this perspective may lead to suboptimal model selection.

\vspace{1mm}
\noindent
\textbf{Finding (1)}.
Existing evaluation metrics for KGC models (e.g., MRR) implicitly assume high predictive sharpness, favoring KGC models producing top-ranked predictions while underestimating models with more stable ranking quality across broader rank ranges.
This suggests that a single fixed metric may be insufficient across diverse application scenarios with different evaluation requirements.

\begin{table}[ht]
\centering
% \small % or 
\footnotesize
\caption{The accuracy of KGC Models evaluated by {\method} with varying levels of popularity-bias robustness $\beta$. The \textcolor[HTML]{F8A602}{gold}, \textcolor[HTML]{A5A5B1}{silver}, and \textcolor[HTML]{D36A49}{bronze} indicate the best, the second, and the third best results, respectively.}
\vspace{-3mm}
\setlength{\tabcolsep}{3.0pt}
\begin{tabular}{l ccccc | ccccc | ccccc}
\toprule
\toprule
\multirow{3}{*}{Models} 
    & \multicolumn{5}{c}{FB15k-237} 
    & \multicolumn{5}{c}{WN18RR}
    & \multicolumn{5}{c}{YAGO3-10}\\
\cmidrule(lr){2-16} 
    & $\beta$=0.0 & $\beta$=0.2 & $\beta$=0.4 & $\beta$=0.6 & $\beta$=0.8   
    & $\beta$=0.0 & $\beta$=0.2 & $\beta$=0.4 & $\beta$=0.6 & $\beta$=0.8    
    & $\beta$=0.0 & $\beta$=0.2 & $\beta$=0.4 & $\beta$=0.6 & $\beta$=0.8    \\
\midrule

RotatE & \cellcolor{rank5}{0.3140} & \cellcolor{rank4}{0.1747} & \cellcolor{rank5}{0.1043} & \cellcolor{rank5}{0.0866} & \cellcolor{rank5}{0.0820}
 & \cellcolor{rank4}{0.4641} & \cellcolor{rank3}{0.3326} & \cellcolor{rank2}{0.2097} & \cellcolor{rank2}{0.1462} & \cellcolor{rank2}{0.1203}
 & \cellcolor{rank2}{0.4743} & \cellcolor{rank2}{0.3525} & \cellcolor{rank2}{0.2207} & \cellcolor{rank3}{0.1249} & \cellcolor{rank3}{0.0791}
 
        \\

ComplEx & \cellcolor{rank4}{0.3146} & \cellcolor{rank3}{0.1801} & \cellcolor{rank3}{0.1097} & \cellcolor{rank4}{0.0897} & \cellcolor{rank4}{0.0821}
 & \cellcolor{rank3}{0.4754} & \cellcolor{rank2}{0.3330} & \cellcolor{rank3}{0.1998} & \cellcolor{rank4}{0.1309} & \cellcolor{rank4}{0.1029}
 & \cellcolor{rank5}{0.4496} & \cellcolor{rank3}{0.3476} & \cellcolor{rank3}{0.2188} & \cellcolor{rank4}{0.1241} & \cellcolor{rank4}{0.0789}
 
        \\

HousE & \cellcolor{rank6}{0.3073} & \cellcolor{rank6}{0.1651} & \cellcolor{rank6}{0.0948} & \cellcolor{rank6}{0.0775} & \cellcolor{rank6}{0.0731}
 & \cellcolor{rank6}{0.4184} & \cellcolor{rank6}{0.3017} & \cellcolor{rank4}{0.1941} & \cellcolor{rank3}{0.1383} & \cellcolor{rank3}{0.1153}
 & \cellcolor{rank3}{0.4538} & \cellcolor{rank5}{0.3268} & \cellcolor{rank5}{0.1828} & \cellcolor{rank5}{0.0778} & \cellcolor{rank5}{0.0297}
 
        \\

TuckER & \cellcolor{rank1}{0.3530} & \cellcolor{rank1}{0.2033} & \cellcolor{rank1}{0.1286} & \cellcolor{rank2}{0.1073} & \cellcolor{rank2}{0.0980}
 & \cellcolor{rank5}{0.4625} & \cellcolor{rank5}{0.3125} & \cellcolor{rank6}{0.1751} & \cellcolor{rank6}{0.1058} & \cellcolor{rank6}{0.0790}
 & \cellcolor{rank1}{0.5654} & \cellcolor{rank1}{0.4260} & \cellcolor{rank1}{0.2675} & \cellcolor{rank1}{0.1505} & \cellcolor{rank1}{0.0947}
 
        \\

pLogicNet & \cellcolor{rank2}{0.3241} & \cellcolor{rank2}{0.1912} & \cellcolor{rank2}{0.1244} & \cellcolor{rank1}{0.1103} & \cellcolor{rank1}{0.1097}
 & \cellcolor{rank1}{0.4957} & \cellcolor{rank1}{0.3478} & \cellcolor{rank1}{0.2221} & \cellcolor{rank1}{0.1601} & \cellcolor{rank1}{0.1354}
 & \cellcolor{rank4}{0.4535} & \cellcolor{rank4}{0.3391} & \cellcolor{rank4}{0.2167} & \cellcolor{rank2}{0.1279} & \cellcolor{rank2}{0.0861}
 
        \\

RNNLogic & \cellcolor{rank3}{0.3197} & \cellcolor{rank5}{0.1745} & \cellcolor{rank4}{0.1058} & \cellcolor{rank3}{0.0901} & \cellcolor{rank3}{0.0862}
 & \cellcolor{rank2}{0.4807} & \cellcolor{rank4}{0.3243} & \cellcolor{rank5}{0.1820} & \cellcolor{rank5}{0.1099} & \cellcolor{rank5}{0.0817}
 
& OOM & OOM & OOM & OOM & OOM 
        \\
\bottomrule
\end{tabular}

\begin{tabular}{l ccccc | ccccc | ccccc}
\multirow{3}{*}{Models} 
    & \multicolumn{5}{c}{Family} 
    & \multicolumn{5}{c}{UMLS}
    & \multicolumn{5}{c}{Kinship}\\
\cmidrule(lr){2-16} 
     & $\beta$=0.0 & $\beta$=0.2 & $\beta$=0.4 & $\beta$=0.6 & $\beta$=0.8   
    & $\beta$=0.0 & $\beta$=0.2 & $\beta$=0.4 & $\beta$=0.6 & $\beta$=0.8    
    & $\beta$=0.0 & $\beta$=0.2 & $\beta$=0.4 & $\beta$=0.6 & $\beta$=0.8    \\
\midrule

RotatE & \cellcolor{rank4}{0.9487} & \cellcolor{rank5}{0.9405} & \cellcolor{rank5}{0.9259} & \cellcolor{rank5}{0.9035} & \cellcolor{rank5}{0.8745}
 & \cellcolor{rank4}{0.7784} & \cellcolor{rank4}{0.7537} & \cellcolor{rank3}{0.7077} & \cellcolor{rank3}{0.6310} & \cellcolor{rank4}{0.5254}
 & \cellcolor{rank4}{0.6336} & \cellcolor{rank3}{0.6273} & \cellcolor{rank3}{0.6199} & \cellcolor{rank3}{0.6113} & \cellcolor{rank3}{0.6016}

        \\

ComplEx & \cellcolor{rank5}{0.9434} & \cellcolor{rank4}{0.9419} & \cellcolor{rank4}{0.9344} & \cellcolor{rank3}{0.9185} & \cellcolor{rank3}{0.8944}
 & \cellcolor{rank5}{0.6046} & \cellcolor{rank5}{0.6126} & \cellcolor{rank5}{0.6092} & \cellcolor{rank5}{0.5888} & \cellcolor{rank3}{0.5512}
 & \cellcolor{rank5}{0.5394} & \cellcolor{rank5}{0.5354} & \cellcolor{rank5}{0.5305} & \cellcolor{rank5}{0.5247} & \cellcolor{rank5}{0.5178}

        \\

HousE & \cellcolor{rank2}{0.9757} & \cellcolor{rank2}{0.9672} & \cellcolor{rank2}{0.9513} & \cellcolor{rank2}{0.9266} & \cellcolor{rank2}{0.8950}
 & \cellcolor{rank2}{0.8354} & \cellcolor{rank1}{0.8128} & \cellcolor{rank1}{0.7677} & \cellcolor{rank1}{0.6903} & \cellcolor{rank2}{0.5820}
 & \cellcolor{rank2}{0.6808} & \cellcolor{rank2}{0.6709} & \cellcolor{rank2}{0.6594} & \cellcolor{rank2}{0.6464} & \cellcolor{rank1}{0.6320}

        \\

TuckER & \cellcolor{rank1}{0.9784} & \cellcolor{rank1}{0.9700} & \cellcolor{rank1}{0.9548} & \cellcolor{rank1}{0.9318} & \cellcolor{rank1}{0.9026}
 & \cellcolor{rank1}{0.8439} & \cellcolor{rank2}{0.8111} & \cellcolor{rank2}{0.7597} & \cellcolor{rank2}{0.6878} & \cellcolor{rank1}{0.6054}
 & \cellcolor{rank1}{0.7067} & \cellcolor{rank1}{0.6912} & \cellcolor{rank1}{0.6733} & \cellcolor{rank1}{0.6531} & \cellcolor{rank2}{0.6308}

        \\
pLogicNet & \cellcolor{rank6}{0.8119} & \cellcolor{rank6}{0.7913} & \cellcolor{rank6}{0.7662} & \cellcolor{rank6}{0.7393} & \cellcolor{rank6}{0.7131}
 & \cellcolor{rank6}{0.5781} & \cellcolor{rank6}{0.5590} & \cellcolor{rank6}{0.5258} & \cellcolor{rank6}{0.4774} & \cellcolor{rank6}{0.4202}
 & \cellcolor{rank6}{0.3771} & \cellcolor{rank6}{0.3795} & \cellcolor{rank6}{0.3817} & \cellcolor{rank6}{0.3834} & \cellcolor{rank6}{0.3847}

\\

RNNLogic & \cellcolor{rank3}{0.9737} & \cellcolor{rank3}{0.9612} & \cellcolor{rank3}{0.9401} & \cellcolor{rank4}{0.9102} & \cellcolor{rank4}{0.8748}
 & \cellcolor{rank3}{0.7942} & \cellcolor{rank3}{0.7608} & \cellcolor{rank4}{0.7063} & \cellcolor{rank4}{0.6251} & \cellcolor{rank5}{0.5233}
 & \cellcolor{rank3}{0.6417} & \cellcolor{rank4}{0.6230} & \cellcolor{rank4}{0.6016} & \cellcolor{rank4}{0.5775} & \cellcolor{rank4}{0.5510}

        \\
        
\bottomrule
\bottomrule
\end{tabular}
\label{table:rq2}
\end{table}

\noindent
\subsection{RQ2. Effects of popularity-bias robustness}\label{sec:eval-rq2}
In this experiment, we evaluate six KGC models with varying levels of popularity-bias robustness $\beta$, while fixing the predictive sharpness level ($\alpha=1$).
As shown in Table~\ref{table:rq2}, increasing $\beta$ significantly changes the evaluation behavior across all datasets.
These results indicate that the perceived effectiveness of a KGC model can vary considerably depending on how importantly the evaluation emphasizes robustness to popularity bias (e.g., discovering a new fact or allowing common facts).
Therefore, ignoring popularity bias during evaluation may result in selecting models that perform well mainly on highly popular facts, while overlooking models that better generalize to less-observed new knowledge.

Specifically, as $\beta$ increases, {\method} assigns larger weights, i.e., $\delta(e)$ and $\delta(r|e)$, to low-popularity triples while assigning smaller weights to high-popularity triples.
Consequently, KGC models that rely heavily on predictions on high-popularity triples tend to lose their advantage as $\beta$ increases.
In contrast, models that maintain high-quality predictions on low-popularity triples become increasingly competitive.
For example, on FB15k-237, TuckER is ranked 1st when $\beta=0$, but its ranking decreases under more popularity-robust settings, whereas pLogicNet becomes the best-performing model when $\beta \geq 0.6$.
Similarly, on WN18RR, RNNLogic rapidly drops from 2nd to 5th as $\beta$ increases, while RotatE improves from 4th to 2nd.

To better understand why the relative rankings of KGC models change as $\beta$ increases,
we conduct an in-depth analysis by comparing their prediction behaviors across different popularity levels.
Specifically, we divide test queries into four groups according to their triple popularity weight $w_t$ and sample representative queries from each group.

\vspace{1mm}
\noindent
\textbf{Case study 1: pLogicNet vs TuckER on FB15k-237}.
Figure~\ref{fig:rq2-case}(a) shows the prediction results of pLogicNet and TuckER across four popularity groups.
For the most popular group (e.g., Top 0.34\%), TuckER achieves a more accurate prediction (rank 2) than pLogicNet (rank 5), 
indicating that TuckER performs particularly well on highly frequent facts.
However, as popularity decreases, pLogicNet consistently maintains relatively strong prediction quality, 
achieving ranks 6, 7, and 20 across the remaining groups.
In contrast, TuckER’s prediction quality substantially deteriorates, producing much lower-ranked predictions (ranks 15, 18, and 50).
These results explain why pLogicNet becomes increasingly favored under higher popularity-bias robustness settings.
We further analyze the models from the perspective of entity-conditioned relation popularity $\delta(r|e)$.
As shown in Figure~\ref{fig:rq2-case}(a) (below pie charts), 
TuckER performs extremely well on highly frequent entity-relation patterns.
For example, in the query $(\textit{Brunei}, \textit{organization}, ?)$, the relation \textit{organization} accounts for 91\% of all relations associated with the entity, 
and TuckER successfully predicts the correct answer at rank 1.
However, on less frequent entity-relation patterns, such as $(\textit{Tenor saxophone}, role, ?)$ and $(\textit{Tak Fujimoto}, \textit{cinematography}, ?)$, 
pLogicNet substantially outperforms TuckER.
These observations indicate that TuckER relies more heavily on highly popular entity-relation patterns, whereas pLogicNet demonstrates stronger robustness to sparse and less-observed facts.

\begin{figure}
    \centering
    \begin{subfigure}[t]{0.48\textwidth}
    \centering
    \begin{subfigure}{0.24\linewidth}
        \includegraphics[width=\linewidth]{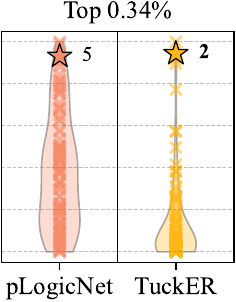}
    \end{subfigure}
    \begin{subfigure}{0.24\linewidth}
        \includegraphics[width=\linewidth]{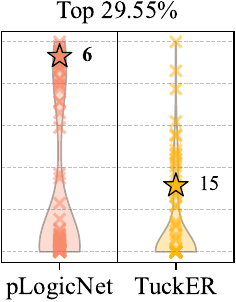}
    \end{subfigure}
    \begin{subfigure}{0.24\linewidth}
        \includegraphics[width=\linewidth]{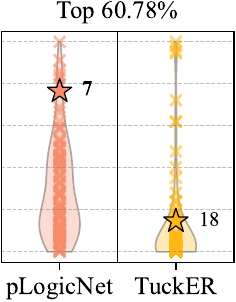}
    \end{subfigure}
    \begin{subfigure}{0.24\linewidth}
        \includegraphics[width=\linewidth]{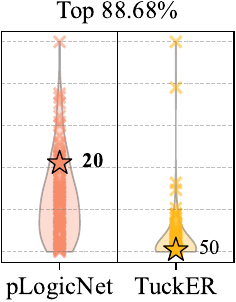}
    \end{subfigure}
    \begin{subfigure}{\linewidth}
        \includegraphics[width=\linewidth]{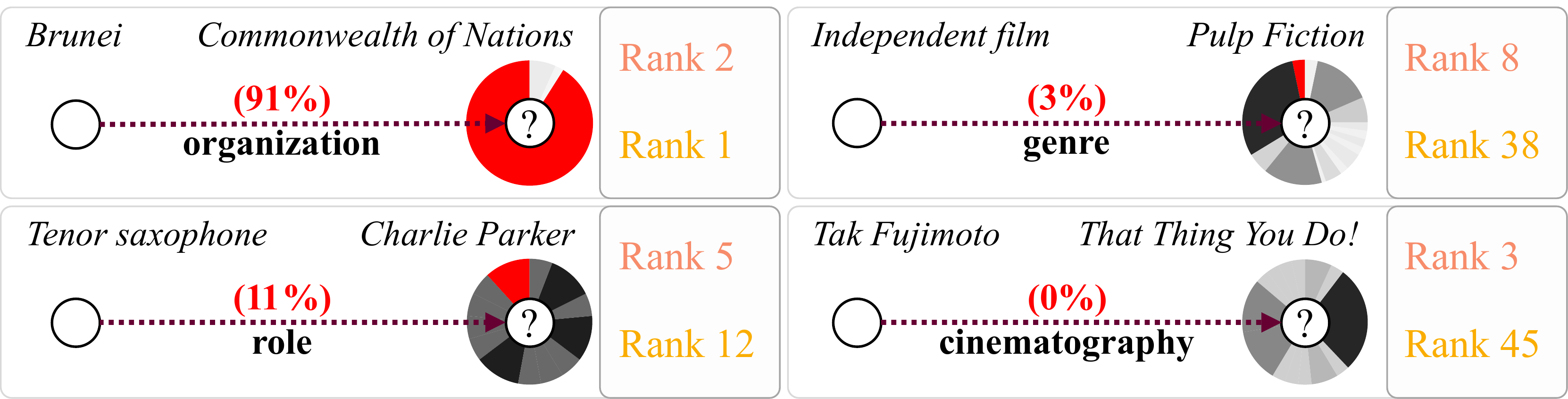}
    \end{subfigure}
    \caption{Case study 1: \textcolor{plogicnet}{pLogicNet} vs \textcolor{tucker}{TuckER} on FB15k-237}
    \end{subfigure}
    \hfill
    \begin{subfigure}[t]{0.48\textwidth}
    \centering
    \begin{subfigure}{0.24\linewidth}
        \includegraphics[width=\linewidth]{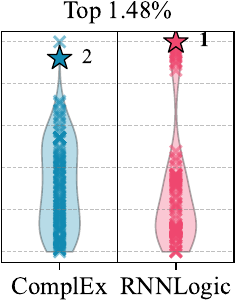}
    \end{subfigure}
    \begin{subfigure}{0.24\linewidth}
        \includegraphics[width=\linewidth]{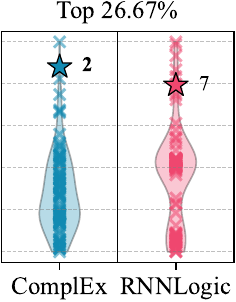}
    \end{subfigure}
    \begin{subfigure}{0.24\linewidth}
        \includegraphics[width=\linewidth]{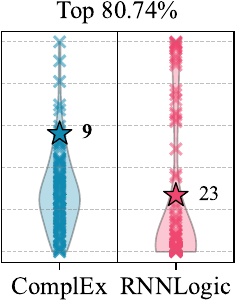}
    \end{subfigure}
    \begin{subfigure}{0.24\linewidth}
        \includegraphics[width=\linewidth]{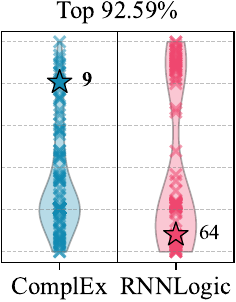}
    \end{subfigure}
    \begin{subfigure}{\linewidth}
        \includegraphics[width=\linewidth]{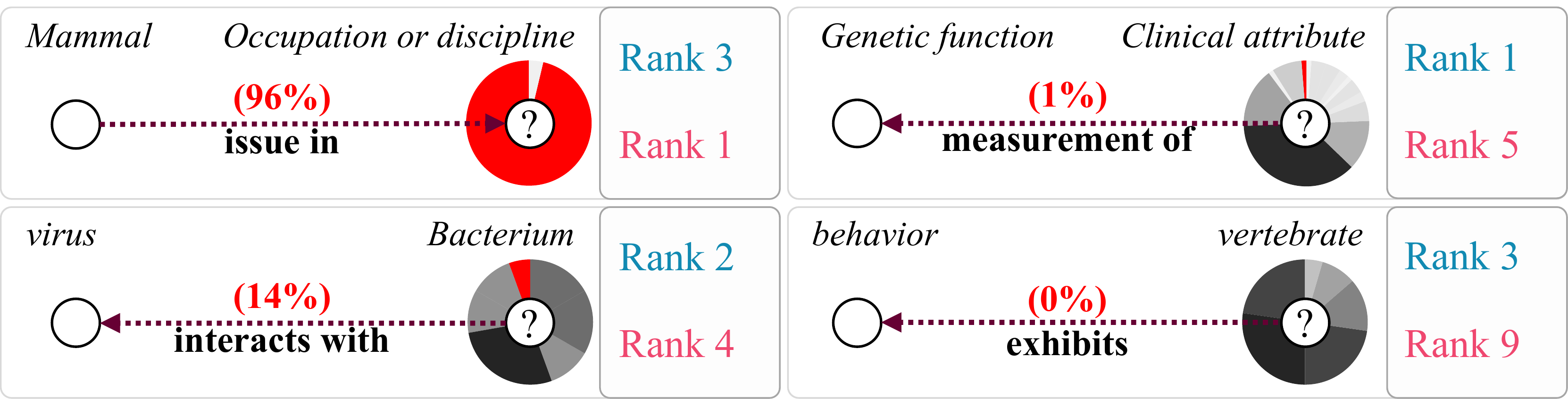}
    \end{subfigure}
    \caption{Case study 2: \textcolor{complex}{ComplEx} vs \textcolor{rnnlogic}{RNNLogic} on UMLS}
    \end{subfigure}
    \vspace{-3mm}
    \caption{Case studies on popularity-bias robustness across varying levels of entity popularity. While TuckER and RNNLogic achieve highly accurate predictions on popular facts, their prediction quality substantially degrades on low-popularity queries. In contrast, pLogicNet and ComplEx maintain more stable prediction quality across broader popularity ranges.}
    \label{fig:rq2-case}
    \vspace{-5mm}
\end{figure}

\vspace{1mm}
\noindent
\textbf{Case study 2: ComplEx vs RNNLogic on UMLS}.
We observe a similar tendency on WN18RR when comparing ComplEx and RNNLogic as shown in Figure~\ref{fig:rq2-case}(b).
Although RNNLogic achieves rank 1 on the most popular query group (Top 1.48\%),
its prediction quality rapidly degrades, producing substantially lower-ranked predictions (e.g., ranks 23 and 64) as query popularity decreases.
In contrast, ComplEx maintains consistently strong prediction quality across all popularity groups, achieving ranks 2, 2, 9, and 9 even for low-popularity queries.
This explains why ComplEx becomes increasingly competitive under higher popularity-bias robustness settings.
From the perspective of entity-conditioned relation popularity $\delta(r|e)$,
RNNLogic achieves highly accurate predictions for highly frequent entity-relation patterns, such as $(Mammal, issue in, ?)$ where the relation accounts for 96\% of the entity’s associated relations.
However, on rare entity-relation patterns, such as $(?, measurement of, Genetic function)$ and $(?, exhibits, behavior)$, ComplEx outperforms RNNLogic.

\vspace{1mm}
\noindent
\textbf{Finding (2)}.
Existing metrics implicitly assume uniform importance across all triples, favoring KGC models that rely heavily on highly popular entities and relations (i.e., exhibiting strong popularity bias), while underestimating models robust to less-observed yet important facts. 
As a result, conventional evaluation settings may overestimate the practical effectiveness of KGC models that fail to generalize beyond frequently observed knowledge.

\subsection{RQ3. Comprehensiveness of {\method}}\label{sec:eval-rq3}

\begin{figure}
    \centering
    \footnotesize
    \newcommand{\width}{0.07\linewidth}
    \begin{tabular}{c|cccccc}
        \toprule
        &RotatE&ComplEx&HousE&TuckER&pLogicNet&RNNLogic \\
        \midrule
        \rotatebox{90}{FB15k-237} & \includegraphics[width=\width]{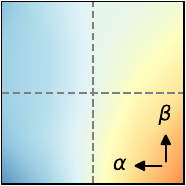} & \includegraphics[width=\width]{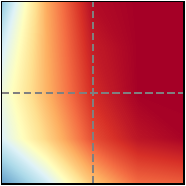} & \includegraphics[width=\width]{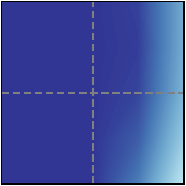} & \includegraphics[width=\width]{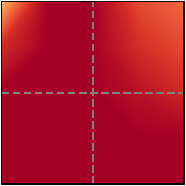} & \includegraphics[width=\width]{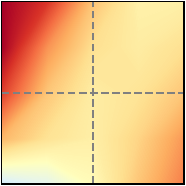} & \includegraphics[width=\width]{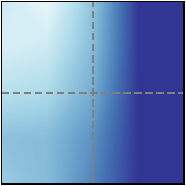} \\
        \midrule
        
        \rotatebox{90}{WN18RR} & \includegraphics[width=\width]{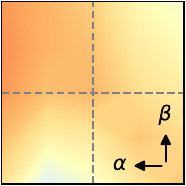} & \includegraphics[width=\width]{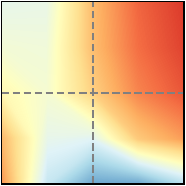} & \includegraphics[width=\width]{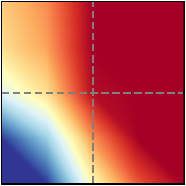} & \includegraphics[width=\width]{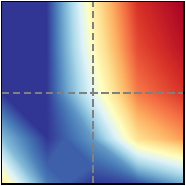} & \includegraphics[width=\width]{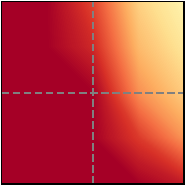} & \includegraphics[width=\width]{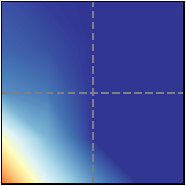} \\
        \midrule
        \rotatebox{90}{YAGO3-10} & \includegraphics[width=\width]{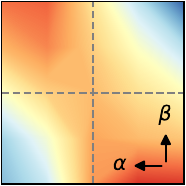} & \includegraphics[width=\width]{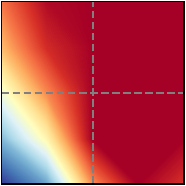} & \includegraphics[width=\width]{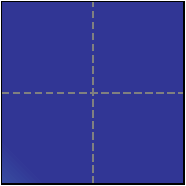} & \includegraphics[width=\width]{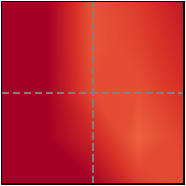} & \includegraphics[width=\width]{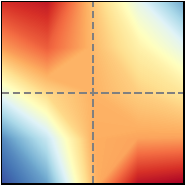} & \includegraphics[width=\width]{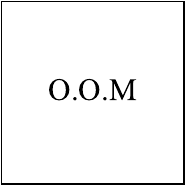} \\
        \midrule
        \rotatebox{90}{Family} & \includegraphics[width=\width]{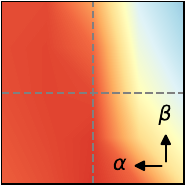} & \includegraphics[width=\width]{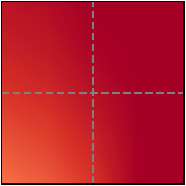} & \includegraphics[width=\width]{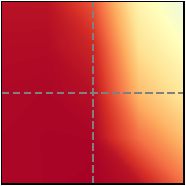} & \includegraphics[width=\width]{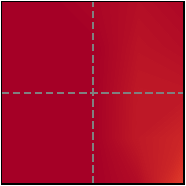} & \includegraphics[width=\width]{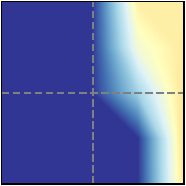} & \includegraphics[width=\width]{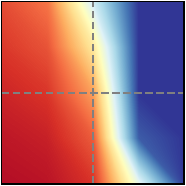} \\
        \midrule
        \rotatebox{90}{UMLS} & \includegraphics[width=\width]{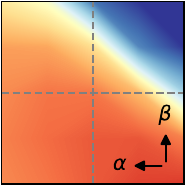} & \includegraphics[width=\width]{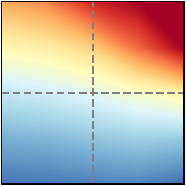} & \includegraphics[width=\width]{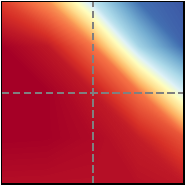} & \includegraphics[width=\width]{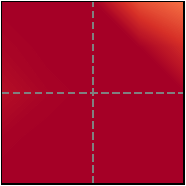} & \includegraphics[width=\width]{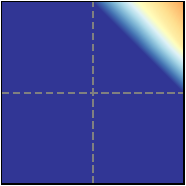} & \includegraphics[width=\width]{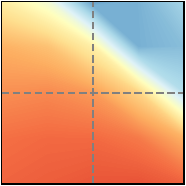} \\
        \midrule
        \rotatebox{90}{Kinship} & \includegraphics[width=\width]{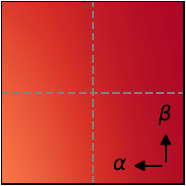} & \includegraphics[width=\width]{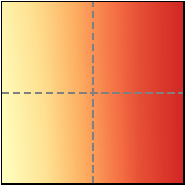} & \includegraphics[width=\width]{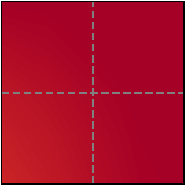} & \includegraphics[width=\width]{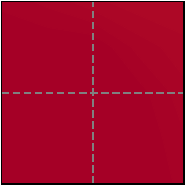} & \includegraphics[width=\width]{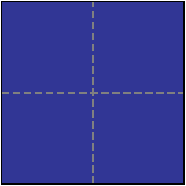} & \includegraphics[width=\width]{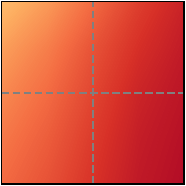} \\
         \bottomrule
    \end{tabular}
    \vspace{-2mm}
    \caption{Comprehensive evaluation of KGC models by {\method} under varying levels of predictive sharpness $\alpha$ and popularity-bias robustness $\beta$. The results demonstrate that model superiority substantially changes across different evaluation perspectives.}
    \vspace{-3mm}
    \label{fig:rq3}
\end{figure}

In this experiment, we investigate how comprehensively {\method} evaluates KGC models across varying levels of predictive sharpness and popularity-bias robustness.
Unlike existing metrics that evaluate KGC models from a single fixed perspective, 
{\method} allows flexible evaluation under diverse combinations of $\alpha$ and $\beta$,
making researchers and practitioners easy to evaluate their KGC models under diverse real-world requirements.

To analyze the relative performance of KGC models across different evaluation perspectives, 
we evaluate each model under multiple $(\alpha,\beta)$ settings.
Since the primary goal of KGC evaluation is to select the best-performing model under a given perspective, 
in this experiment, we apply min-max normalization to the scores of the six comparing models within each $(\alpha,\beta)$ coordinate.
Figure~\ref{fig:rq3} visualizes the performance of each KGC model on each dataset as a heatmap.
In each heatmap, the $x$-axis represents the predictive sharpness factor $\alpha$, ranging from $-1.0$ to $1.0$ with a step size of $0.5$, while the $y$-axis represents the popularity-bias robustness factor $\beta$, ranging from $0.0$ to $0.8$ with a step size of $0.2$.
O.O.M indicates an out-of-memory case.
The results reveal that no single KGC model achieves the best performance across all datasets and evaluation settings.
For example, although TuckER performs best on most datasets under many different evaluation settings, it is outperformed by HousE and pLogicNet on WN18RR and by ComplEx on YAGO3-10.
These results indicate that the preferred KGC model fundamentally depends on the desired evaluation perspective as we claimed.
In addition, model superiority cannot be fully characterized by a single fixed evaluation metric.
In other words, even KGC models that achieve similar performance under conventional metrics often exhibit different performance patterns under varying levels of predictive sharpness and popularity-bias robustness.

\vspace{1mm}
\noindent
\textbf{Finding (3)}.
Existing metrics with a single fixed evaluation perspective fail to fully characterize the performance of KGC models whose relative performance can substantially vary depending on the desired evaluation perspective.
{\method} enables comprehensive evaluation of KGC models across diverse levels of predictive sharpness and popularity-bias robustness, revealing performance characteristics that cannot be captured by existing fixed metrics.

\begin{table}[t]
\footnotesize
\setlength{\tabcolsep}{4pt}
\caption{Different hyperparameters for two KGC models used in the experiment for RQ4.}
\vspace{-3mm}
\newcommand{\boxsize}{0.2}
    \centering
    \begin{tabular}{cclccccccc}
        \toprule
         \# & label & KGC model & Batch size & \# of negative samples & Dimension & gamma & alpha & learning rate & L3 reg. weight \\
         \toprule
             0& \tikz \fill[fill=rotate1] (0,0) rectangle (\boxsize,\boxsize); & RotatE & 1,024 & 128 & 1,000 & 24 & 1.0 & 5e-5 & 0.0 \\
             1& \tikz \fill[fill=rotate2] (0,0) rectangle (\boxsize,\boxsize); & RotatE & 512 & 128 & 500 & 24 & 1.0 & 5e-5 & 0.0 \\
             2& \tikz \fill[fill=rotate3] (0,0) rectangle (\boxsize,\boxsize); & RotatE & 128 & 128 & 200 & 24 & 1.0 & 5e-5 & 0.0 \\
             3& \tikz \fill[fill=rotate4] (0,0) rectangle (\boxsize,\boxsize); & RotatE & 128 & 512 & 500 & 18 & 1.0 & 5e-5 & 0.0 \\
             4& \tikz \fill[fill=rotate5] (0,0) rectangle (\boxsize,\boxsize); & RotatE & 128 & 128 & 1,000 & 12 & 0.5 & 5e-5 & 0.0 \\
             5& \tikz \fill[fill=complex1] (0,0) rectangle (\boxsize,\boxsize); & ComplEx & 1,024 & 128 & 1,000 & 500 & 1.0 & 1e-3 & 1e-5 \\
             6& \tikz \fill[fill=complex2] (0,0) rectangle (\boxsize,\boxsize); & ComplEx & 512 & 128 & 1,000 & 200 & 1.0 & 1e-4 & 1e-5 \\
             7& \tikz \fill[fill=complex3] (0,0) rectangle (\boxsize,\boxsize); & ComplEx & 128 & 128 & 1,000 & 500 & 0.5 & 5e-3 & 1e-5 \\
             8& \tikz \fill[fill=complex4] (0,0) rectangle (\boxsize,\boxsize); & ComplEx & 256 & 128 & 1,000 & 200 & 0.5 & 1e-3 & 1e-5 \\
             9& \tikz \fill[fill=complex5] (0,0) rectangle (\boxsize,\boxsize); & ComplEx & 512 & 256 & 1,000 & 500 & 1.0 & 1e-3 & 1e-5 \\
         \bottomrule
    \end{tabular}
    \label{tab:modelconfig}
\end{table}

\begin{figure}[t]
\centering
\setlength{\tabcolsep}{2.0pt}
\newcommand{\boxsize}{0.25}
\newcommand{\ftgap}{0.2em}
\newcommand{\cellgap}{0.5em}
% ---------- row 1 ----------
\begin{minipage}[t]{0.24\textwidth}
    \centering
    \includegraphics[width=\linewidth]{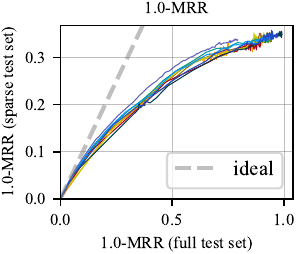}

    \vspace{\ftgap}

    \includegraphics[width=\linewidth]{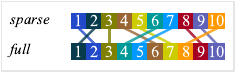}
\end{minipage}\hfill
\begin{minipage}[t]{0.24\textwidth}
    \centering
    \includegraphics[width=\linewidth]{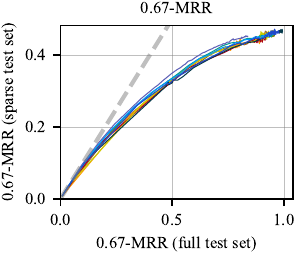}

    \vspace{\ftgap}
    \includegraphics[width=\linewidth]{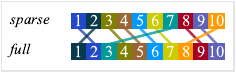}
\end{minipage}\hfill
\begin{minipage}[t]{0.24\textwidth}
    \centering
    \includegraphics[width=\linewidth]{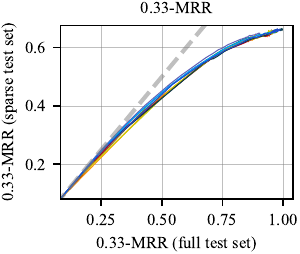}

    \vspace{\ftgap}

    \includegraphics[width=\linewidth]{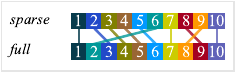}
\end{minipage}\hfill
\begin{minipage}[t]{0.24\textwidth}
    \centering
    \includegraphics[width=\linewidth]{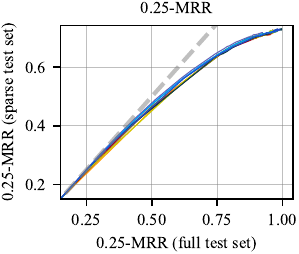}

    \vspace{\ftgap}

    \includegraphics[width=\linewidth]{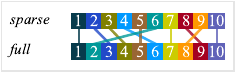}
\end{minipage}

\hrule

\vspace{\cellgap}

% ---------- row 2 ----------
\begin{minipage}[t]{0.24\textwidth}
    \centering
    \includegraphics[width=\linewidth]{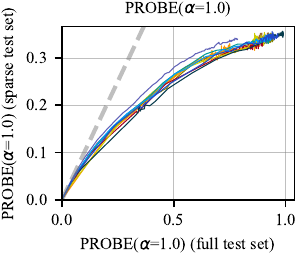}

    \vspace{\ftgap}

    \includegraphics[width=\linewidth]{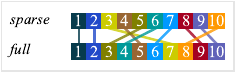}
\end{minipage}\hfill
\begin{minipage}[t]{0.24\textwidth}
    \centering
    \includegraphics[width=\linewidth]{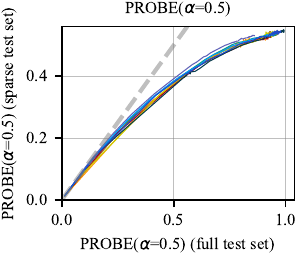}

    \vspace{\ftgap}

    \includegraphics[width=\linewidth]{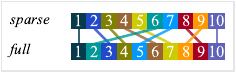}
\end{minipage}\hfill
\begin{minipage}[t]{0.24\textwidth}
    \centering
    \includegraphics[width=\linewidth]{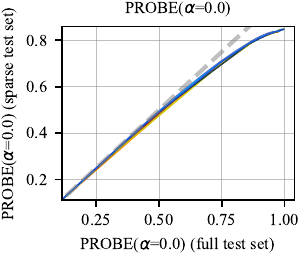}

    \vspace{\ftgap}

    \includegraphics[width=\linewidth]{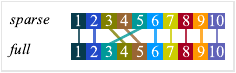}
\end{minipage}\hfill
\begin{minipage}[t]{0.24\textwidth}
    \centering
    \includegraphics[width=\linewidth]{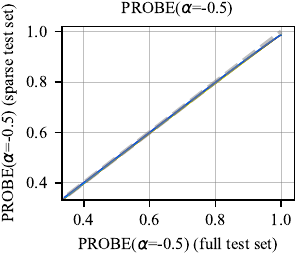}

    \vspace{\ftgap}

    \includegraphics[width=\linewidth]{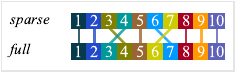}
\end{minipage}

\caption{Comparison of intrinsic model performance under CWA ($x$-axis) and observed performance under OWA ($y$-axis). A reliable evaluation metric should preserve the relative ranking of models and assign similar scores to models with comparable intrinsic performance under CWA, even when evaluated under OWA.}
\label{fig:rq4}
\end{figure}

\noindent
\subsection{RQ4. Evaluation Reliability under the Open-World Assumption}\label{sec:eval-rq4}
In this experiment, we evaluate how reliably {\method} reflects the intrinsic performance of KGC models under the open-world assumption (OWA).
To properly assess intrinsic model performance under OWA, 
the corresponding performance under the closed-world assumption (CWA) must be known. 
However, in real-world KGs, obtaining all true facts is practically impossible.
Therefore, we use the artificially constructed family-tree KG introduced in~\cite{yang2022rethinking}, where all facts are fully observable under CWA. 
The KG consists of 6,004 entities, 23 relations, and 192,532 triples.
From this closed-world KG, OWA settings can be simulated by intentionally removing triples,
enabling direct comparison between the intrinsic performance of KGC models under CWA and their observed evaluation results under OWA.
Specifically, the KG dataset consists of a training set and two test sets.
\textbf{(1) Full test set}: all facts from the complete family-tree KG are used, enabling evaluation under CWA. 
\textbf{(2) Sparse test set}: only 75\% of the facts are used, simulating evaluation under OWA.

We select RotatE and ComplEx as the base KGC models and use five model variants with different hyperparameters for each KGC model as described in Table~\ref{tab:modelconfig}.
Each model is trained on the training set and evaluated on both the full test set and sparse test set by using three evaluation metrics, including p-MRR, Hits@K, and {\method}. 
For p-MRR, we use four values for the hyperparameter $p$, 1.0, 0.67, 0.33, and 0.25, following the settings used in the original work~\cite{yang2022rethinking}.
Figure~\ref{fig:rq4} shows the results, where the $x$-axis represents the model performance on the full test set, i.e., the intrinsic performance under the closed-world assumption (CWA), while the $y$-axis represents the observed performance on the sparse test set under the open-world assumption (OWA).
For Hits@K and p-MRR, models with the same intrinsic performance under CWA are often evaluated quite differently under OWA, and the relative rankings of models substantially change between the two settings.
In contrast, {\method} consistently assigns similar evaluation scores to models with the same CWA performance, while well-preserving their relative rankings under OWA.

\vspace{1mm}
\noindent
\textbf{Finding (4)}.
Existing metrics often fail to reliably reflect the intrinsic performance of KGC models under the open-world assumption, 
as their evaluation results can substantially vary depending on the incompleteness of observable facts.
In contrast, {\method} more consistently preserves the relative performance and rankings of KGC models between CWA and OWA settings, enabling more reliable evaluation under incomplete knowledge graphs.

\section{Conclusion}\label{sec:con}
In this paper, we point out that existing rank-based metrics for KGC evaluation overlook two important perspectives: (\textbf{P1}) predictive sharpness and (\textbf{P2}) popularity-bias robustness.
To address these limitations, we propose \textbf{{\method}}, a generalized KGC evaluation framework consisting of a rank transformer (RT) and a rank aggregator (RA), which flexibly control predictive sharpness and popularity-bias robustness, respectively.
We theoretically analyze {\method} by defining six key properties for reliable KGC evaluation and prove that {\method} satisfies all of them, unlike existing metrics.
In particular, we show that {\method} more consistently preserves the intrinsic performance and relative rankings of KGC models under the open-world assumption.
Extensive experiments with six KGC models on six real-world KGs demonstrate that existing rank-based metrics implicitly assume fixed evaluation perspectives, which can over- or under-estimate KGC models depending on their ranking behavior and robustness to popularity bias, while {\method} enables more comprehensive, flexible, and reliable evaluation across diverse evaluation perspectives.

In future work, we plan to further extend {\method} to a broader range of KGC models and real-world KGs.
We also plan to develop differentiable extensions of {\method} that can be directly incorporated into the training objective, 
enabling KGC models to learn application-specific ranking behaviors and robustness requirements in an end-to-end manner.

% \section{GenAI Usage Disclosur}\label{sec-genai}
% In accordance with the ACM authorship policy, we disclose the usage of generative AI tools (e.g., ChatGPT) as follows.
% \begin{itemize}
%     \item \textbf{GenAI usage in writing}: ChatGPT was only used to review grammatical consistency during the writing the manuscript.
%     \item \textbf{GenAI usage in data processing}: ChatGPT was only used for generating code for drawing figures (e.g., matplotlib.pyplot).
%     \item \textbf{Author responsibility}: All uses of GenAI were limited to assistant roles. We conducted final verification and refinement of all GenAI generated results, analyses, and textual content.
% \end{itemize}

%%
%% The acknowledgments section is defined using the "acks" environment
%% (and NOT an unnumbered section). This ensures the proper
%% identification of the section in the article metadata, and the
%% consistent spelling of the heading.
% \begin{acks}
% To Robert, for the bagels and explaining CMYK and color spaces.
% \end{acks}

%%
%% The next two lines define the bibliography style to be used, and
%% the bibliography file.

\bibliographystyle{ACM-Reference-Format}
\bibliography{bibliography.bib}

@inproceedings{ko2023khan,
  title={KHAN: Knowledge-Aware Hierarchical Attention Networks for Accurate Political Stance Prediction},
  author={Ko, Yunyong and Ryu, Seongeun and Han, Soeun and Jeon,Youngseung and Kim, Jaehoon and Park, Sohyun and Han, Kyungsik Tong, Hanghang and Kim., Sang-Wook},
  booktitle={Proceedings of the ACM Web Conference (WWW)},
  pages={1572--1583},
  year={2023},
  isbn = {9781450394161},
  publisher = {Association for Computing Machinery (ACM)},
  doi = {10.1145/3543507.3583300},
  location = {Austin, TX, USA},
}

@inproceedings{zhang2022nclass-naacl,
  title={KCD: Knowledge Walks and Textual Cues Enhanced Political Perspective Detection in News Media},
  author={Zhang, Wenqian and Feng, Shangbin and Chen, Zilong and Lei, Zhenyu and Li, Jundong and Luo, Minnan},
  booktitle={Proceedings of the 2022 Conference of the North American Chapter of the Association for Computational Linguistics: Human Language Technologies},
  pages={4129--4140},
  year={2022}
}

@article{feng2021nclass-arxiv,
  title={Kgap: Knowledge graph augmented political perspective detection in news media},
  author={Feng, Shangbin and Chen, Zilong and Zhang, Wenqian and Li, Qingyao and Zheng, Qinghua and Chang, Xiaojun and Luo, Minnan},
  journal={arXiv preprint arXiv:2108.03861},
  year={2021}
}

@inproceedings{yasunaga2021kgqa-acl,
  title={QA-GNN: Reasoning with Language Models and Knowledge Graphs for Question Answering},
  author={Yasunaga, Michihiro and Ren, Hongyu and Bosselut, Antoine and Liang, Percy and Leskovec, Jure},
  booktitle={Proceedings of the 2021 Conference of the North American Chapter of the Association for Computational Linguistics: Human Language Technologies},
  pages={535--546},
  year={2021}
}

@inproceedings{zhang2018kgqa-aaai,
  title={Variational reasoning for question answering with knowledge graph},
  author={Zhang, Yuyu and Dai, Hanjun and Kozareva, Zornitsa and Smola, Alexander and Song, Le},
  booktitle={Proceedings of the AAAI conference on artificial intelligence},
  volume={32},
  number={1},
  year={2018}
}

@inproceedings{huang2019kgqa-www,
  title={Knowledge graph embedding based question answering},
  author={Huang, Xiao and Zhang, Jingyuan and Li, Dingcheng and Li, Ping},
  booktitle={Proceedings of the twelfth ACM international conference on web search and data mining},
  pages={105--113},
  year={2019}
}

@inproceedings{safavi2020codex,
  title={CoDEx: A Comprehensive Knowledge Graph Completion Benchmark},
  author={Safavi, Tara and Koutra, Danai},
  booktitle={Proceedings of the 2020 Conference on Empirical Methods in Natural Language Processing (EMNLP)},
  year={2020}
}

@inproceedings{zhang2022redgnn,
  title={Knowledge graph reasoning with relational digraph},
  author={Zhang, Yongqi and Yao, Quanming},
  booktitle={Proceedings of the ACM web conference 2022},
  pages={912--924},
  year={2022}
}

@inproceedings{qi2024bi,
  title={Bi-directional Learning of Logical Rules with Type Constraints for Knowledge Graph Completion},
  author={Qi, Kunxun and Du, Jianfeng and Wan, Hai},
  booktitle={Proceedings of the 33rd ACM International Conference on Information and Knowledge Management},
  pages={1899--1908},
  year={2024}
}

@article{chen2025pathrag,
  title={Pathrag: Pruning graph-based retrieval augmented generation with relational paths},
  author={Chen, Boyu and Guo, Zirui and Yang, Zidan and Chen, Yuluo and Chen, Junze and Liu, Zhenghao and Shi, Chuan and Yang, Cheng},
  journal={arXiv preprint arXiv:2502.14902},
  year={2025}
}

@article{guo2024lightrag,
  title={Lightrag: Simple and fast retrieval-augmented generation},
  author={Guo, Zirui and Xia, Lianghao and Yu, Yanhua and Ao, Tu and Huang, Chao},
  journal={arXiv preprint arXiv:2410.05779},
  year={2024}
}

@inproceedings{lin2020drug-ijcai,
  title={KGNN: Knowledge graph neural network for drug-drug interaction prediction.},
  author={Lin, Xuan and Quan, Zhe and Wang, Zhi-Jie and Ma, Tengfei and Zeng, Xiangxiang},
  booktitle={IJCAI},
  volume={380},
  pages={2739--2745},
  year={2020}
}

@inproceedings{nickel2011rescal,
  title={A three-way model for collective learning on multi-relational data.},
  author={Nickel, Maximilian and Tresp, Volker and Kriegel, Hans-Peter and others},
  booktitle={Icml},
  volume={11},
  number={10.5555},
  pages={3104482--3104584},
  year={2011}
}

@inproceedings{lv2022crat1,
  title={Do pre-trained models benefit knowledge graph completion? a reliable evaluation and a reasonable approach},
  author={Lv, Xin and Lin, Yankai and Cao, Yixin and Hou, Lei and Li, Juanzi and Liu, Zhiyuan and Li, Peng and Zhou, Jie},
  booktitle={Findings of the association for computational linguistics: ACL 2022},
  pages={3570--3581},
  year={2022}
}

@inproceedings{zhou2020rec-sigir,
  title={Interactive recommender system via knowledge graph-enhanced reinforcement learning},
  author={Zhou, Sijin and Dai, Xinyi and Chen, Haokun and Zhang, Weinan and Ren, Kan and Tang, Ruiming and He, Xiuqiang and Yu, Yong},
  booktitle={Proceedings of the 43rd international ACM SIGIR conference on research and development in information retrieval},
  pages={179--188},
  year={2020}
}

@inproceedings{wang2019rec-www,
  title={Knowledge graph convolutional networks for recommender systems},
  author={Wang, Hongwei and Zhao, Miao and Xie, Xing and Li, Wenjie and Guo, Minyi},
  booktitle={The world wide web conference},
  pages={3307--3313},
  year={2019}
}

@inproceedings{wang2018rec-www,
  title={DKN: Deep knowledge-aware network for news recommendation},
  author={Wang, Hongwei and Zhang, Fuzheng and Xie, Xing and Guo, Minyi},
  booktitle={Proceedings of the Web Conference (WWW)},
  pages={1835--1844},
  year={2018}
}

@inproceedings{ko2021mascot,
  title={MASCOT: A Quantization Framework for Efficient Matrix Factorization in Recommender Systems},
  author={Ko, Yunyong and Yu, Jae-Seo and Bae, Hong-Kyun and Park, Yongjun and Lee, Dongwon and Kim, Sang-Wook},
  booktitle={Proceedings of the IEEE International Conference on Data Mining (ICDM)},
  pages={290--299},
  year={2021},
  organization={IEEE}
}

@inproceedings{jang2023sage,
  title={SAGE: A Storage-Based Approach for Scalable and Efficient Sparse Generalized Matrix-Matrix Multiplication},
  author={Jang, Myung-Hwan and Ko, Yunyong and Gwon, Hyuck-Moo and Jo, Ikhyeon and Park, Yongjun and Kim, Sang-Wook},
  booktitle={Proceedings of the ACM International Conference on Information and Knowledge Management (CIKM)},
  pages={923--933},
  year={2023}
}

@inproceedings{vashishth2019composition,
  title     = {Composition-based Multi-Relational Graph Convolutional Networks},
  author    = {Vashishth, Shikhar and Sanyal, Soumya and Nitin, Vikram and Talukdar, Partha},
  booktitle = {Proceedings of International Conference on Learning Representations (ICLR)},
  year      = {2020},
  url       = {https://openreview.net/forum?id=BylA_C4tPr}
}

@inproceedings{trouillon2016complex,
  title     = {Complex embeddings for simple link prediction},
  author    = {Trouillon, Th{\'e}o and Welbl, Johannes and Riedel, Sebastian and Gaussier, {\'E}ric and Bouchard, Guillaume},
  booktitle = {Proceedings of International Conference on Machine Learning},
  pages     = {2071--2080},
  year      = {2016},
  organization = {PMLR}
}

@inproceedings{yang2015embeddingentitiesrelationslearning,
  title     = {Embedding Entities and Relations for Learning and Inference in Knowledge Bases},
  author    = {Yang, Bishan and Yih, Wen-tau and He, Xiaodong and Gao, Jianfeng and Deng, Li},
  booktitle = {Proceedings of International Conference on Learning Representations (ICLR), Poster Track},
  year      = {2015},
  url       = {https://arxiv.org/abs/1412.6575}
}

@inproceedings{toutanova2015observed,
  title     = {Observed versus latent features for knowledge base and text inference},
  author    = {Toutanova, Kristina and Chen, Danqi},
  booktitle = {Proceedings of the 3rd Workshop on Continuous Vector Space Models and Their Compositionality},
  pages     = {57--66},
  year      = {2015}
}

@inproceedings{bollacker2008freebase,
  title     = {Freebase: a collaboratively created graph database for structuring human knowledge},
  author    = {Bollacker, Kurt and Evans, Colin and Paritosh, Praveen and Sturge, Tim and Taylor, Jamie},
  booktitle = {Proceedings of the 2008 ACM SIGMOD International Conference on Management of Data},
  pages     = {1247--1250},
  year      = {2008}
}

@inproceedings{li2022house,
  title     = {House: Knowledge graph embedding with householder parameterization},
  author    = {Li, Rui and Zhao, Jianan and Li, Chaozhuo and He, Di and Wang, Yiqi and Liu, Yuming and Sun, Hao and Wang, Senzhang and Deng, Weiwei and Shen, Yanming and others},
  booktitle = {International Conference on Machine Learning},
  pages     = {13209--13224},
  year      = {2022},
  organization = {PMLR}
}

@inproceedings{liu2021indigo,
  title     = {Indigo: GNN-based inductive knowledge graph completion using pair-wise encoding},
  author    = {Liu, Shuwen and Grau, Bernardo and Horrocks, Ian and Kostylev, Egor},
  booktitle = {Advances in Neural Information Processing Systems},
  volume    = {34},
  pages     = {2034--2045},
  year      = {2021}
}

@article{guo2020survey,
  title     = {A survey on knowledge graph-based recommender systems},
  author    = {Guo, Qingyu and Zhuang, Fuzhen and Qin, Chuan and Zhu, Hengshu and Xie, Xing and Xiong, Hui and He, Qing},
  journal   = {IEEE Transactions on Knowledge and Data Engineering},
  volume    = {34},
  number    = {8},
  pages     = {3549--3568},
  year      = {2020},
  publisher = {IEEE}
}

@inproceedings{zhu2021neural,
  title     = {Neural Bellman-Ford networks: A general graph neural network framework for link prediction},
  author    = {Zhu, Zhaocheng and Zhang, Zuobai and Xhonneux, Louis-Pascal and Tang, Jian},
  booktitle = {Advances in Neural Information Processing Systems},
  volume    = {34},
  pages     = {29476--29490},
  year      = {2021}
}

@inproceedings{carlson2010toward,
  title     = {Toward an architecture for never-ending language learning},
  author    = {Carlson, Andrew and Betteridge, Justin and Kisiel, Bryan and Settles, Burr and Hruschka, Estevam and Mitchell, Tom},
  booktitle = {Proceedings of the AAAI Conference on Artificial Intelligence},
  volume    = {24},
  number    = {1},
  pages     = {1306--1313},
  year      = {2010}
}

@inproceedings{yang2017differentiable,
  title     = {Differentiable learning of logical rules for knowledge base reasoning},
  author    = {Yang, Fan and Yang, Zhilin and Cohen, William W},
  booktitle = {Advances in Neural Information Processing Systems},
  volume    = {30},
  year      = {2017}
}

@inproceedings{yang2022rethinking,
  title     = {Rethinking knowledge graph evaluation under the open-world assumption},
  author    = {Yang, Haotong and Lin, Zhouchen and Zhang, Muhan},
  booktitle = {Advances in Neural Information Processing Systems},
  volume    = {35},
  pages     = {8374--8385},
  year      = {2022}
}

@inproceedings{wang2019tackling,
  title     = {Tackling Long-Tailed Relations and Uncommon Entities in Knowledge Graph Completion},
  author    = {Wang, Zihao and Lai, Kwunping and Li, Piji and Bing, Lidong and Lam, Wai},
  booktitle = {Proceedings of the 2019 Conference on Empirical Methods in Natural Language Processing and the 9th International Joint Conference on Natural Language Processing (EMNLP-IJCNLP)},
  month     = {nov},
  year      = {2019},
  address   = {Hong Kong, China},
  publisher = {Association for Computational Linguistics},
  url       = {https://aclanthology.org/D19-1024/},
  doi       = {10.18653/v1/D19-1024},
  pages     = {250--260}
}

@inproceedings{qu2019probabilistic,
  title     = {Probabilistic logic neural networks for reasoning},
  author    = {Qu, Meng and Tang, Jian},
  booktitle = {Advances in Neural Information Processing Systems},
  volume    = {32},
  year      = {2019}
}

@inproceedings{mohamed2020popularity,
  title     = {Popularity agnostic evaluation of knowledge graph embeddings},
  author    = {Mohamed, Aisha and Parambath, Shameem and Kaoudi, Zoi and Aboulnaga, Ashraf},
  booktitle = {Conference on Uncertainty in Artificial Intelligence},
  pages     = {1059--1068},
  year      = {2020},
  organization = {PMLR}
}

@inproceedings{yih2014semantic,
  title     = {Semantic parsing for single-relation question answering},
  author    = {Yih, Wen-tau and He, Xiaodong and Meek, Christopher},
  booktitle = {Proceedings of the 52nd Annual Meeting of the Association for Computational Linguistics (Volume 2: Short Papers)},
  pages     = {643--648},
  year      = {2014}
}

@inproceedings{hao2017end,
  title     = {An end-to-end model for question answering over knowledge base with cross-attention combining global knowledge},
  author    = {Hao, Yanchao and Zhang, Yuanzhe and Liu, Kang and He, Shizhu and Liu, Zhanyi and Wu, Hua and Zhao, Jun},
  booktitle = {Proceedings of the 55th Annual Meeting of the Association for Computational Linguistics (Volume 1: Long Papers)},
  pages     = {221--231},
  year      = {2017}
}

@article{edge2024local,
  title     = {From local to global: A graph RAG approach to query-focused summarization},
  author    = {Edge, Darren and Trinh, Ha and Cheng, Newman and Bradley, Joshua and Chao, Alex and Mody, Apurva and Truitt, Steven and Metropolitansky, Dasha and Ness, Robert Osazuwa and Larson, Jonathan},
  journal   = {arXiv preprint arXiv:2404.16130},
  year      = {2024}
}

@article{pan2024unifying,
  title     = {Unifying large language models and knowledge graphs: A roadmap},
  author    = {Pan, Shirui and Luo, Linhao and Wang, Yufei and Chen, Chen and Wang, Jiapu and Wu, Xindong},
  journal   = {IEEE Transactions on Knowledge and Data Engineering},
  volume    = {36},
  number    = {7},
  pages     = {3580--3599},
  year      = {2024},
  publisher = {IEEE}
}

@article{bonner2022implications,
  title={Implications of topological imbalance for representation learning on biomedical knowledge graphs},
  author={Bonner, Stephen and Kirik, Ufuk and Engkvist, Ola and Tang, Jian and Barrett, Ian P},
  journal={Briefings in bioinformatics},
  volume={23},
  number={5},
  pages={bbac279},
  year={2022},
  publisher={Oxford University Press}
}

@inproceedings{shomer2023toward,
  title={Toward degree bias in embedding-based knowledge graph completion},
  author={Shomer, Harry and Jin, Wei and Wang, Wentao and Tang, Jiliang},
  booktitle={Proceedings of the ACM web conference 2023},
  pages={705--715},
  year={2023}
}

@article{zhu2023mednewdisease,
  title={RDKG-115: Assisting drug repurposing and discovery for rare diseases by trimodal knowledge graph embedding},
  author={Zhu, Chaoyu and Xia, Xiaoqiong and Li, Nan and Zhong, Fan and Yang, Zhihao and Liu, Lei},
  journal={Computers in Biology and Medicine},
  volume={164},
  pages={107262},
  year={2023},
  publisher={Elsevier}
}

@article{ye2021medcoldstart,
  title={A unified drug--target interaction prediction framework based on knowledge graph and recommendation system},
  author={Ye, Qing and Hsieh, Chang-Yu and Yang, Ziyi and Kang, Yu and Chen, Jiming and Cao, Dongsheng and He, Shibo and Hou, Tingjun},
  journal={Nature communications},
  volume={12},
  number={1},
  pages={6775},
  year={2021},
  publisher={Nature Publishing Group UK London}
}

@article{bang2023biomedical,
  title={Biomedical knowledge graph learning for drug repurposing by extending guilt-by-association to multiple layers},
  author={Bang, Dongmin and Lim, Sangsoo and Lee, Sangseon and Kim, Sun},
  journal={Nature Communications},
  volume={14},
  number={1},
  pages={3570},
  year={2023},
  publisher={Nature Publishing Group UK London}
}

@article{vella2022medtrialcost,
  title={Few-shot learning for low-data drug discovery},
  author={Vella, Daniel and Ebejer, Jean-Paul},
  journal={Journal of Chemical Information and Modeling},
  volume={63},
  number={1},
  pages={27--42},
  year={2022},
  publisher={ACS Publications}
}

@inproceedings{jiang2025medRAG,
  title={HyKGE: A hypothesis knowledge graph enhanced RAG framework for accurate and reliable medical LLMs responses},
  author={Jiang, Xinke and Zhang, Ruizhe and Xu, Yongxin and Qiu, Rihong and Fang, Yue and Wang, Zhiyuan and Tang, Jinyi and Ding, Hongxin and Chu, Xu and Zhao, Junfeng and others},
  booktitle={Proceedings of the 63rd Annual Meeting of the Association for Computational Linguistics (Volume 1: Long Papers)},
  pages={11836--11856},
  year={2025}
}

@article{matsumoto2024bioRAG,
  title={KRAGEN: a knowledge graph-enhanced RAG framework for biomedical problem solving using large language models},
  author={Matsumoto, Nicholas and Moran, Jay and Choi, Hyunjun and Hernandez, Miguel E and Venkatesan, Mythreye and Wang, Paul and Moore, Jason H},
  journal={Bioinformatics},
  volume={40},
  number={6},
  pages={btae353},
  year={2024},
  publisher={Oxford University Press}
}

@inproceedings{luo2025hypergraphrag,
  title={Hypergraphrag: Retrieval-augmented generation via hypergraph-structured knowledge representation},
  author={Luo, Haoran and Chen, Guanting and Zheng, Yandan and Wu, Xiaobao and Guo, Yikai and Lin, Qika and Feng, Yu and Kuang, Zemin and Song, Meina and Zhu, Yifan and others},
  journal={arXiv preprint arXiv:2503.21322},
  year={2025}
}

@inproceedings{schlichtkrull2018modeling,
  title     = {Modeling relational data with graph convolutional networks},
  author    = {Schlichtkrull, Michael and Kipf, Thomas N and Bloem, Peter and Van Den Berg, Rianne and Titov, Ivan and Welling, Max},
  booktitle = {The Semantic Web: 15th International Conference, ESWC 2018, Heraklion, Crete, Greece, June 3--7, 2018, Proceedings 15},
  pages     = {593--607},
  year      = {2018},
  organization = {Springer}
}

@inproceedings{qu2020rnnlogic,
  title     = {RNNLogic: Learning Logic Rules for Reasoning on Knowledge Graphs},
  author    = {Qu, Meng and Chen, Junkun and Xhonneux, Louis-Pascal and Bengio, Yoshua and Tang, Jian},
  booktitle = {International Conference on Learning Representations (ICLR)},
  year      = {2021},
  url       = {https://openreview.net/forum?id=tGZu6DlbreV}
}

@inproceedings{sun2019rotate,
  title     = {RotatE: Knowledge Graph Embedding by Relational Rotation in Complex Space},
  author    = {Sun, Zhiqing and Deng, Zhi-Hong and Nie, Jian-Yun and Tang, Jian},
  booktitle = {International Conference on Learning Representations (ICLR)},
  year      = {2019},
  url       = {https://openreview.net/forum?id=HkgEQnRqYQ}
}

@article{zhang2021drug,
  title     = {Drug repurposing for COVID-19 via knowledge graph completion},
  author    = {Zhang, Rui and Hristovski, Dimitar and Schutte, Dalton and Kastrin, Andrej and Fiszman, Marcelo and Kilicoglu, Halil},
  journal   = {Journal of Biomedical Informatics},
  volume    = {115},
  pages     = {103696},
  year      = {2021},
  publisher = {Elsevier}
}

@article{zeng2022toward,
  title     = {Toward better drug discovery with knowledge graph},
  author    = {Zeng, Xiangxiang and Tu, Xinqi and Liu, Yuansheng and Fu, Xiangzheng and Su, Yansen},
  journal   = {Current Opinion in Structural Biology},
  volume    = {72},
  pages     = {114--126},
  year      = {2022},
  publisher = {Elsevier}
}

@inproceedings{xiong2018oneshot,
  title     = {One-Shot Relational Learning for Knowledge Graphs},
  author    = {Xiong, Wenhan and Yu, Mo and Chang, Shiyu and Guo, Xiaoxiao and Wang, William Yang},
  booktitle = {Proceedings of the 2018 Conference on Empirical Methods in Natural Language Processing},
  month     = {oct--nov},
  year      = {2018},
  address   = {Brussels, Belgium},
  publisher = {Association for Computational Linguistics},
  url       = {https://aclanthology.org/D18-1223/},
  doi       = {10.18653/v1/D18-1223},
  pages     = {1980--1990}
}

@inproceedings{sun2020reevaluation,
  title     = {A Re-evaluation of Knowledge Graph Completion Methods},
  author    = {Sun, Zhiqing and Vashishth, Shikhar and Sanyal, Soumya and Talukdar, Partha and Yang, Yiming},
  booktitle = {Proceedings of the 58th Annual Meeting of the Association for Computational Linguistics},
  month     = {jul},
  year      = {2020},
  address   = {Online},
  publisher = {Association for Computational Linguistics},
  url       = {https://aclanthology.org/2020.acl-main.489/},
  doi       = {10.18653/v1/2020.acl-main.489},
  pages     = {5516--5522}
}

@inproceedings{bordes2013translating,
  title     = {Translating embeddings for modeling multi-relational data},
  author    = {Bordes, Antoine and Usunier, Nicolas and Garcia-Duran, Alberto and Weston, Jason and Yakhnenko, Oksana},
  booktitle = {Advances in Neural Information Processing Systems},
  volume    = {26},
  year      = {2013}
}

@inproceedings{balazevic2019tucker,
  title     = {TuckER: Tensor Factorization for Knowledge Graph Completion},
  author    = {Balazevic, Ivana and Allen, Carl and Hospedales, Timothy},
  booktitle = {Proceedings of the 2019 Conference on Empirical Methods in Natural Language Processing and the 9th International Joint Conference on Natural Language Processing (EMNLP-IJCNLP)},
  month     = {nov},
  year      = {2019},
  address   = {Hong Kong, China},
  publisher = {Association for Computational Linguistics},
  url       = {https://aclanthology.org/D19-1522/},
  doi       = {10.18653/v1/D19-1522},
  pages     = {5185--5194}
}

@article{hoyt2022unified,
  title     = {A unified framework for rank-based evaluation metrics for link prediction in knowledge graphs},
  author    = {Hoyt, Charles Tapley and Berrendorf, Max and Galkin, Mikhail and Tresp, Volker and Gyori, Benjamin M},
  journal   = {arXiv preprint arXiv:2203.07544},
  year      = {2022}
}

@inproceedings{dettmers2018convolutional,
  title     = {Convolutional 2D knowledge graph embeddings},
  author    = {Dettmers, Tim and Minervini, Pasquale and Stenetorp, Pontus and Riedel, Sebastian},
  booktitle = {Proceedings of the AAAI Conference on Artificial Intelligence},
  volume    = {32},
  number    = {1},
  year      = {2018}
}

@article{miller1995wordnet,
  title     = {WordNet: a lexical database for English},
  author    = {Miller, George A},
  journal   = {Communications of the ACM},
  volume    = {38},
  number    = {11},
  pages     = {39--41},
  year      = {1995},
  publisher = {ACM}
}

@inproceedings{suchanek2007yago,
  title     = {YAGO: a core of semantic knowledge},
  author    = {Suchanek, Fabian M and Kasneci, Gjergji and Weikum, Gerhard},
  booktitle = {Proceedings of the 16th International Conference on World Wide Web},
  pages     = {697--706},
  year      = {2007}
}

@inproceedings{mahdisoltani2013yago3,
  title     = {YAGO3: A knowledge base from multilingual Wikipedias},
  author    = {Mahdisoltani, Farzaneh and Biega, Joanna and Suchanek, Fabian M},
  booktitle = {CIDR},
  year      = {2013}
}

@article{sun2024mlsaa,
  title={Incorporating multi-level sampling with adaptive aggregation for inductive knowledge graph completion},
  author={Sun, Kai and Jiang, Huajie and Hu, Yongli and Yin, Baocai},
  journal={ACM transactions on knowledge discovery from data},
  volume={18},
  number={5},
  pages={1--16},
  year={2024},
  publisher={ACM New York, NY}
}

@inproceedings{moon2025sharp,
  title={How Sharp and Bias-Robust is a Model? Dual Evaluation Perspectives on Knowledge Graph Completion},
  author={Moon, Sooho and Ko, Yunyong},
  booktitle={Proceedings of the nineteenth ACM international conference on web search and data mining},
  year={2026}
}

@inproceedings{tiwari2021revisiting,
  title={Revisiting the evaluation protocol of knowledge graph completion methods for link prediction},
  author={Tiwari, Sudhanshu and Bansal, Iti and Rivero, Carlos R},
  booktitle={Proceedings of the Web Conference 2021},
  pages={809--820},
  year={2021}
}

@article{berrendorf2020ambiguity,
  title={On the ambiguity of rank-based evaluation of entity alignment or link prediction methods},
  author={Berrendorf, Max and Faerman, Evgeniy and Vermue, Laurent and Tresp, Volker},
  journal={arXiv preprint arXiv:2002.06914},
  year={2020}
}

@article{bonner2022review,
  title     = {A review of biomedical datasets relating to drug discovery: a knowledge graph perspective},
  author    = {Bonner, Stephen and Barrett, Ian P and Ye, Cheng and Swiers, Rowan and Engkvist, Ola and Bender, Andreas and Hoyt, Charles Tapley and Hamilton, William L},
  journal   = {Briefings in Bioinformatics},
  volume    = {23},
  number    = {6},
  pages     = {bbac404},
  year      = {2022},
  publisher = {Oxford University Press}
}

% \appendix

% \section{Interpretation Under PROBE}
% \begin{lemma}[h]
%     Hits@1 is equal to $PROBE_{\alpha \rightarrow \infty}$
% \end{lemma}
% \begin{align*}
%     PROBE_{\alpha \rightarrow \infty}=\lim_{\alpha \rightarrow \infty} \frac{r^{-\alpha} - \E^{-\alpha}}{1-\E^{-\alpha}}  \\
%     =\begin{cases*}
%     1 & \text{if } r = 1 \\
%     0 & \text{if } r > 1
%     \end{cases*} \\
% \end{align*}

% \begin{lemma}
%     $PROBE_{\alpha =-1}$ is a chance adjusted metric
% \end{lemma}
% Let's assume $r_i\sim Uniform(\left\{1,2,...,\E\right\})$ where $\mathbb{E}[r_i]=\frac{\E+1}{2}$
% \begin{align*}
%     \mathbb{E}[PROBE_{\alpha =-1}]
%     &=\mathbb{E}[\frac{1}{n}\sum_{i=1}^n \frac{r_i-\E}{1-\E}] \\
%     &=\frac{1}{n}\sum_{i=1}^n \mathbb{E}[\frac{r_i-\E}{1-\E}] \\
%     &=\frac{1}{n}\sum_{i=1}^n 
%     \frac{\mathbb{E}[r_i]-\E}{1-\E} \\
%     &=\frac{1}{n}\sum_{i=1}^n 
%     \frac{1-\E}{2(1-\E)} \\
%     &=\frac{1}{2}.
% \end{align*}

% \section{Property Proofs}

% \section{Generalization of PROBE}
% \begin{itemize}
%     \item \textbf{MR}: Equivalent to PROBE with transformation function $f$ and $\alpha=-1$.
%     \item \textbf{MRR}: Equivalent to PROBE with transformation function $f$ and $\alpha=1$.
%     \item \textbf{p-MRR}: Equivalent to PROBE with transformation function $f$ and treat $p$ as $\alpha$.
%     \item \textbf{Hits@K}: Although a direct form can't be achieved from our PROBE, we can formulate Hits@K as a $f^*$ with $\alpha\rightarrow\infty$ that is shifted $K$ amount along the rank axis.
% \end{itemize}

\end{document}